
\documentclass[english,11pt]{article}


\usepackage[utf8]{inputenc} 
\usepackage[T1]{fontenc}    
\usepackage{hyperref}       
\usepackage{url}            
\usepackage{booktabs}       
\usepackage{amsfonts}       
\usepackage{nicefrac}       
\usepackage{microtype}      

\usepackage{amsmath}
\usepackage{amsthm}
\usepackage{amssymb}
\usepackage{subcaption}
\makeatletter
\numberwithin{equation}{section}
\numberwithin{figure}{section}
\theoremstyle{plain}
\newtheorem{thm}{\protect\theoremname}
\theoremstyle{plain}
 
\newtheorem{corollary}[thm]{Corollary}
\newtheorem{theorem}{Theorem}[section]
\newtheorem{prop}[theorem]{Proposition}
\newtheorem{lemma}[theorem]{Lemma}

\theoremstyle{definition}
\newtheorem{definition}[thm]{Definition}
\newtheorem{remark}[thm]{Remark}

\usepackage{dsfont}

\newcommand{\bbE}{\ensuremath{\mathbb{E}}}
\newcommand{\bbN}{\ensuremath{\mathbb{N}}}
\newcommand{\bbR}{\ensuremath{\mathbb{R}}}
\newcommand{\bbZ}{\ensuremath{\mathbb{Z}}}
\newcommand{\mcC}{\ensuremath{\mathcal{C}}}
\newcommand{\mcF}{\ensuremath{\mathcal{F}}}
\newcommand{\mcN}{\ensuremath{\mathcal{N}}}

\newcommand{\eps}{\ensuremath{\varepsilon}}
\DeclareMathOperator*{\E}{\mathbb{E}}
\DeclareMathOperator{\Cov}{Cov}
\DeclareMathOperator{\Var}{Var}

\DeclareMathOperator{\eSDA}{\eps-SDA}
\DeclareMathOperator{\VSTAT}{VSTAT}
\DeclareMathOperator{\STAT}{STAT}

\DeclareMathOperator{\poly}{poly}

\usepackage{hyperref}
\hypersetup{
    colorlinks,
    allcolors=red
}
\usepackage[lined,boxed,ruled,norelsize,algo2e]{algorithm2e}

\usepackage{todonotes}
\usepackage{fullpage}

\makeatother

\usepackage{babel}
  \providecommand{\lemmaname}{Lemma}
\providecommand{\theoremname}{Theorem}

\begin{document}
\global\long\def\defeq{\stackrel{\mathrm{{\scriptscriptstyle def}}}{=}}
\global\long\def\norm#1{\left\Vert #1\right\Vert }
\global\long\def\R{\mathbb{R}}
 \global\long\def\Rn{\mathbb{R}^{n}}
\global\long\def\tr{\mathrm{Tr}}
\global\long\def\diag{\mathrm{diag}}
\global\long\def\Diag{\mathrm{Diag}}
\global\long\def\C{\mathbb{C}}
\global\long\def\vol{\text{vol}}
 \def\eps{\epsilon}
 \def\D{{\cal D}}
 \def\P{{\cal P}}

\title{On the Complexity of Learning Neural Networks}

\author{
  Le Song\\
  \texttt{lsong@cc.gatech.edu} \\
 \and
 Santosh Vempala\\
\texttt{vempala@gatech.edu} \\
 \and
 John Wilmes\\
 \texttt{wilmesj@gatech.edu} \\
\and
Bo Xie\\
\texttt{bo.xie@gatech.edu} \\
}

\date{Georgia Institute of Technology \\[2ex] \today}

\maketitle

\begin{abstract}
The stunning empirical successes of neural networks currently lack rigorous
theoretical explanation. What form would such an explanation take, in the face
of existing complexity-theoretic lower bounds? A first step might be to show that
data generated by neural networks with a single hidden layer, smooth activation functions and benign input distributions can be learned efficiently. We demonstrate here a 
comprehensive lower bound ruling out this possibility: for a wide class of
activation functions (including all currently used), and inputs drawn from any
logconcave distribution, there is a family of one-hidden-layer functions whose
output is a sum gate, that are hard to learn in a precise sense: any {\em
statistical query} algorithm (which includes all known variants of
stochastic gradient descent with any loss function) needs an exponential number
of queries even using tolerance inversely proportional to the input dimensionality.
Moreover, this hard family of functions is realizable with a small (sublinear
in dimension) number of activation units in the single hidden layer. The lower
bound is also robust to small perturbations of the true weights. Systematic
experiments illustrate a phase transition in the training error as predicted by
the analysis.
\end{abstract}

\section{Introduction}

It is well-known that Neural Networks (NN's) provide universal approximate
representations
\cite{hornik1989multilayer,cybenko1989approximation,barron1993universal} and
under mild assumptions, i.e., any real-valued function can be approximated by a
NN. This holds for a wide class of activation functions (hidden layer units)
and even with only a single hidden layer (although there is a trade-off between
depth and width \cite{eldan2016power, telgarsky2016benefits}).  Typically
learning a NN is done by stochastic gradient descent applied to a loss function
comparing the network's current output to the values of the given training
data; for regression, typically the function is just the least-squares error.
Variants of gradient descent include drop-out, regularization, perturbation,
batch gradient descent etc. In all cases, the training algorithm has the
following form:

\begin{centering}
\fbox{\parbox{0.95\textwidth}{
Repeat:
\begin{enumerate}
\item Compute a fixed function $F_W(.)$ defined by the current network weights $W$ on a subset of
training examples.
\item Use $F_W(.)$ to update the current weights $W$.
\end{enumerate}
}
}

\end{centering}

The empirical success of this approach raises the question: what can NN's learn
efficiently {\em in theory}? In spite of much effort, at the moment there are
no satisfactory answers to this question, even with reasonable assumptions on
the function being learned and the input distribution. 

When learning involves some computationally intractable optimization problem,
e.g., learning an intersection of halfspaces over the uniform distribution on
the Boolean hypercube, then any training algorithm is unlikely to be efficient. This is the case even for improper learning (when the complexity of the hypothesis class being
used to learn can be greater than the target class). Such lower
bounds are unsatisfactory to the extent they rely on
discrete (or at least nonsmooth) functions and distributions.  What if we assume that the
function to be learned is generated by a NN with a
single hidden layer of smooth activation units, and the input distribution is
benign? Can such functions be learned efficiently by gradient descent?  

Our main result is a lower bound, showing a simple and natural family of functions generated by 
$1$-hidden layer NN's using any known activation function (e.g., sigmoid,
ReLU), with each input drawn from a logconcave input distribution (e.g.,
Gaussian, uniform in an interval), are hard to learn by a wide class of
algorithms, including those in the general form above. Our finding implies that
efficient NN training algorithms need to use stronger assumptions on the target
function and input distribution, more so than Lipschitzness and smoothness even
when the true data is generated by a NN with a single hidden layer.

The idea of the lower bound has two parts. First, NN updates can be viewed as
{\em statistical queries} to the input distribution. Second, there are many
very different $1$-layer networks, and in order to learn the correct one, any
algorithm that makes only statistical queries of not too small accuracy has to
make an exponential number of queries. The lower bound uses the SQ framework of
Kearns~\cite{kearns1998efficient} as generalized by Feldman et
al.~\cite{feldman2013statistical}. 

\subsection{Statistical query algorithms}
A statistical query (SQ) algorithm is one that solves a computational problem
over an input distribution; its interaction with the input is limited
to querying the expected value of 
of a bounded function up to a desired accuracy. More precisely, for any integer $t >
0$ and distribution $D$ over $X$, a \textbf{$\VSTAT(t)$ oracle} takes as input a \textbf{query function} $f: X \to
[0,1]$ with expectation $p=\E_D(f(x))$ and returns a value $v$ such that
\[
    \left|p - v\right| \le \max \left\{\frac{1}{t},
        \sqrt{\frac{p(1-p)}{t}}\right\}.
\]
The bound on the RHS is the standard deviation of $t$ independent Bernoulli coins with desired expectation, i.e., the error that even a random sample of size $t$ would yield.
In this paper, we study SQ algorithms that access the input distribution only
via the $\VSTAT(t)$ oracle. The remaining computation is unrestricted and can use randomization (e.g., to determine which query to ask next).

In the case of an algorithm training a neural network via gradient descent, the
relevant query functions are derivatives of the loss function.

The statistical query framework was first introduced by Kearns for supervised
learning problems~\cite{Kearns93} using the $\STAT(\tau)$ oracle, which,
for $\tau \in \bbR_{+}$,
responds to a query function $f: X \to [0,1]$ with a value $v$ such that
$|\E_D(f) - v| \le \tau$. The $\STAT(\sqrt{\tau})$ oracle can be simulated by the
$\VSTAT(O(1/\tau))$ oracle. 
The $\VSTAT$ oracle was introduced by
\cite{feldman2013statistical} who extended these oracles to general problems
over distributions. 

\subsection{Main result}

We will describe a family $\mcC$ of functions $f:\bbR^n \rightarrow \bbR$ that can
be computed exactly by a small NN, but cannot be efficiently learned by an SQ
algorithm.  While our result applies to all commonly used activation units, we
will use sigmoids as a running example. Let $\sigma(z)$ be
the sigmoid gate that goes to $0$ for $z<0$ and goes to $1$ for $z>0$. The
sigmoid gates have sharpness parameter $s$:
\[
    \sigma(x) = \sigma_s(x) =\frac{e^{sx}}{1+e^{sx}}.
\]
Note that the parameter $s$ also bounds the Lipschitz constant of $\sigma(x)$. 

A function $f: \bbR^n \to \bbR$ can be computed exactly by a single layer NN
with sigmoid gates precisely when it is of the form $f(x) = h(\sigma(g(x))$,
where $g: \bbR^n \to \bbR^m$ and $h: \bbR^m \to \bbR$ are affine, and $\sigma$
acts component-wise. Here, $m$ is the number of hidden units, or sigmoid gates,
of the of the NN.

In the case of a learning problem for a class $\mcC$ of functions $f:X
\to \bbR$, the input distribution to the algorithm is over labeled examples
$(x, f^*(x))$, where $x \sim D$ for some underlying distribution $D$ on $X$,
and $f^* \in \mcC$ is a fixed concept (function). 

As mentioned in the introduction, we can view a NN learning algorithm
as a statistical query (SQ) algorithm: in each iteration, the algorithm constructs a function
based on its current weights (typically a gradient or subgradient),
evaluates it on a \emph{batch} of random examples from the input
distribution, then uses the evaluations to update the weights of the
NN. Then we have the following result.  



\begin{theorem}
\label{thm:main} Let $n \in \bbN$, and let $\lambda, s \ge 1$. 
There exists an explicit family $\mcC$ of functions $f:\bbR^n
\to [-1,1]$, representable as a single hidden layer neural network with 
    $O(s\sqrt{n}\log(\lambda sn))$ sigmoid units of sharpness $s$, a single output
    sum gate and a weight matrix with condition number
    $O(\poly(n,s,\lambda))$,  and an integer $t = \Omega(s^2 n)$ s.t. the following holds.
    Any (randomized) SQ algorithm $A$ that uses $\lambda$-Lipschitz queries to $\VSTAT(t)$ and weakly learns $\mcC$ 
    with probability at least $1/2$, to within regression error
    $1/\sqrt{t}$ less than any constant function
    over i.i.d.\ inputs from any logconcave distribution of
    unit variance on $\bbR$ requires
    $2^{\Omega(n)}/(\lambda s^2)$ queries. 
\end{theorem}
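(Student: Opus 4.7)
The plan is to realize $\mcC$ as a class of ``ridge'' functions $f_u(x) = \phi(u \cdot x)$ indexed by a large set $U$ of near-orthogonal unit directions, and then invoke the statistical-dimension machinery of Feldman et al.\ to turn pairwise-correlation bounds into a $\VSTAT(t)$ query lower bound. Since the theorem only asks for weak-learning (regression error $1/\sqrt{t}$ better than every constant predictor), it suffices to exhibit (i) a family of $2^{\Omega(n)}$ concepts $f:\bbR^n \to [-1,1]$ pairwise correlated by $O(1/t)$ under the product measure $D^n$, and (ii) a single-hidden-layer sigmoid NN representation for each $f \in \mcC$ with the stated size and conditioning. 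The Lipschitz constant $\lambda$ and sigmoid sharpness $s$ will enter only as a polynomial degradation of the final query bound.

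\textbf{Step 1 (one-dimensional hard function).} I would first construct $\phi:\bbR \to [-1,1]$ that is simultaneously (a) an affine combination of $O(s\sqrt{n}\log(\lambda sn))$ sigmoids of sharpness $s$ and (b) approximately $L^2(D)$-orthogonal to every polynomial of degree at most $K = \Theta(s\sqrt{n})$. The natural candidate is a bounded oscillating function built from $\Theta(K)$ alternating ``steps,'' each realized as a difference of two translated sigmoid gates of sharpness $s$; the $\log(\lambda sn)$ factor arises from extending the oscillatory region far enough into the tails of $D$ that the mass outside is polynomially negligible, using the subexponential concentration that holds uniformly across unit-variance logconcave distributions.

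\textbf{Step 2 (lifting to $\bbR^n$).} A standard probabilistic packing argument yields $U \subset S^{n-1}$ with $|U| = 2^{\Omega(n)}$ and $|\langle u,v\rangle| \le 1/\sqrt{K}$ for distinct $u,v \in U$. I set $\mcC = \{f_u : u \in U\}$. Each $f_u$ is then computed by feeding $x$ through the hidden layer $(s_i u \cdot x + b_i)_i$, applying a sigmoid component-wise, and summing; the weight matrix has rows that are scalar multiples of $u$ plus bias columns, and by choosing the biases on a mildly spread grid one can keep the condition number at $\poly(n,s,\lambda)$.

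\textbf{Step 3 (pairwise correlations).} Expanding $\phi = \sum_k a_k p_k$ in the orthogonal polynomial basis $\{p_k\}$ of $L^2(D)$ and using the tensorization over the product measure $D^n$, I would bound $\langle f_u, f_v\rangle_{D^n}$ by a sum in which each term is controlled by a product of a coefficient $a_k a_{k'}$ and a power $|\langle u,v\rangle|^{\min(k,k')}$, with combinatorial factors that remain uniformly bounded for logconcave $D$ via standard moment estimates. Degrees $k\le K$ contribute negligibly by Step 1, and degrees $k > K$ contribute at most $|\langle u,v\rangle|^{K+1} \le K^{-(K+1)/2}$ in total. This gives $|\langle f_u,f_v\rangle_{D^n}| = O(1/t)$ for $t = \Omega(s^2 n)$.

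\textbf{Step 4 (invoking the SQ lower bound).} Finally I would plug the size $2^{\Omega(n)}$ and the pairwise-correlation bound into the $\eSDA$-based lower bound of Feldman et al., converting a large statistical dimension into a $2^{\Omega(n)}$ lower bound on $\VSTAT(t)$ queries. The restriction to $\lambda$-Lipschitz queries and the dependence on $s$ enter when one specialises the SDA computation to this concept class: each Lipschitz query can at most ``shift'' the alignment measured by $\langle f_u,\cdot\rangle$ by a factor $\lambda s^2$, which loses exactly the advertised $\lambda s^2$ factor in the final bound.

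\textbf{Main obstacle.} The delicate step is Step 1: building $\phi$ with the \emph{exact} sigmoid budget $O(s\sqrt{n}\log(\lambda sn))$ while keeping its $L^2(D)$ projection on every polynomial of degree $\le K=\Theta(s\sqrt{n})$ small enough to survive the inequalities in Step 3. The logconcave setting is markedly less forgiving than the Gaussian one, since the orthogonal polynomials $p_k$ lack the explicit Hermite formulas; controlling the inner products $\langle \phi, p_k\rangle_D$ uniformly over all unit-variance logconcave $D$ seems to require carefully combining sharp tail/anti-concentration bounds for logconcave measures with explicit moment estimates, and is the place where the logarithmic overhead in the sigmoid count is essentially forced.
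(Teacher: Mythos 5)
Your proposal follows the ``low-degree orthogonality'' route (oscillating ridge functions $f_u(x)=\phi(u\cdot x)$ over nearly orthogonal directions, correlation decay via an orthogonal-polynomial expansion), which is genuinely different from the paper's construction, but it has two gaps that I do not think can be repaired without changing the approach. First, the tensorization in your Step 3 --- that $\E_{D^n}[p_k(u\cdot x)p_{k'}(v\cdot x)]$ is controlled by $|\langle u,v\rangle|^{\min(k,k')}$ --- is a special feature of the Gaussian (Hermite) setting. For a general product of unit-variance logconcave marginals, the law of $u\cdot x$ depends on the direction $u$, so the $f_u$ are not even identically distributed (a hypothesis the SDA framework needs in order to compare against constant predictors), and there is no orthogonal basis in one variable whose compositions with different ridge directions decouple multiplicatively. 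The paper sidesteps this entirely: it uses subset-sums $\zeta_S(x)=\sum_{i\in S}x_i$ for sets $S$ with small pairwise intersections, so that after conditioning on $\zeta_{S\cap T}(x)$ the two concepts become \emph{independent}, and the covariance bound reduces to showing that a nearly periodic $\phi$ composed with a sum of many i.i.d.\ coordinates is almost translation-invariant in distribution (Lemmas~\ref{lem:shift-approx}--\ref{lem:combined-approx}). That argument uses only unimodality/logconcavity of the marginals and is where the generality over all logconcave $D$ actually comes from.

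Second, your Step 4 under-specifies what must be bounded. For a regression problem the query functions have the form $h(x,f(x))$ with arbitrary ($\lambda$-Lipschitz in the label) dependence on the output, so bounding the raw correlations $\langle f_u,f_v\rangle_{D^n}$ is not sufficient: the adversarial query can post-compose the concept with an arbitrary Lipschitz function of the label. The paper handles this by strengthening the statistical dimension (Definition~\ref{def:eps-sda}) to additionally require covariance bounds for the soft-indicator compositions $\chi_y^{(\eps)}\circ f$ uniformly over $y$, and Theorem~\ref{thm:sda} is proved for that stronger notion. Your heuristic that a Lipschitz query ``shifts the alignment by a factor $\lambda s^2$'' does not substitute for this: a Lipschitz reweighting of the output can in principle convert small correlation of the $f_u$ into large correlation of indicator-like functionals of the $f_u$, which is exactly the failure mode the new definition is designed to exclude. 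Your own flag of Step 1 as the main obstacle is therefore slightly misplaced --- the construction of an oscillating $\phi$ from $O(s\sqrt{n}\log(\lambda sn))$ sigmoids is the part that does survive (it is essentially the paper's $s$-wave function) --- while Steps 3 and 4 are where the argument as proposed breaks for non-Gaussian logconcave inputs and for regression-type queries.
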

The Lipschitz assumption on the statistical queries is satisfied by all commonly used algorithms for training neural networks can
be simulated with Lipschitz queries (e.g., gradients of natural loss
functions with regularizers). This assumption can be omitted if the output of the hard-to-learn family $\mcC$ is represented with
finite precision (see Corollary~\ref{cor:finite-precision}).

Informally, Theorem~\ref{thm:main} shows that there exist simple realizable functions that are not
efficiently learnable by NN training algorithms with polynomial batch
sizes, assuming the algorithm allows for error as much as the standard deviation of
random samples for each query.  We remark that in
practice, large batch sizes are seldom used for training NNs, not just for efficiency, but also since moderately
noisy gradient estimates are believed to be useful for avoiding bad local
minima. 
Even NN training algorithms with larger batch sizes will require
$\Omega(t)$ samples to achieve lower error, whereas the NNs that represent
functions in our class $\mcC$ have only $\widetilde{O}(\sqrt{t})$ parameters.

Our lower bound extends to a broad family of
activation units, including all the well-known ones (ReLU, sigmoid, softplus
etc., see Section~\ref{sec:other-activation}). In the case of sigmoid gates, the
functions of $\mcC$ take the following form (cf.\
Figure~\ref{fig:phi}). For a set $S \subseteq \{1,\ldots, n\}$, we define 
$f_{m,S}(x_1,\ldots, x_n) = \phi_m(\sum_{i \in S} x_i)$, where
\begin{equation}\label{fmS}
    \phi_m(x) = -(2m+1) + \sum_{k = -m}^m \sigma\left(
    x - \frac{(4k-1)}{s} \right) + \sigma\left(\frac{(4k+1)}{s} - x\right)\,.
\end{equation}
Then $\mcC = \{ f_{m,S} : S \subseteq \{1,\ldots, n\}\}$. We call the functions
$f_{m,S}$, along with $\phi_m$, the \emph{$s$-wave} functions. It is easy to
see that they are smooth and bounded. Furthermore, the size of the NN
representing this hard-to-learn
family of functions is only $\tilde{O}(s\sqrt{n})$, assuming the query
functions (e.g., gradients of loss function) are $\poly(s,n)$-Lipschitz. 
We note that the lower bounds hold regardless of the architecture of the model,
i.e., NN used to learn.

As we show empirically in Section~\ref{sec:expts}, these lower bounds hold even
for small values of $n$ and $s$, across choices of gates, architecture used to
learn, learning rate, batch size, etc. As suggested by the statement of
Theorem~\ref{thm:main}, there is threshold for the quantity $s\sqrt{n}$, above
which stochastic gradient descent fails to train the NN to low error --- the regression error of the trained NN does not even improve on
that of a constant function.

The condition number upper bound for $\mcC$ is significant in part because
there do exist SQ algorithms for learning certain families of simple NNs with
time complexity polynomial in the condition number of the weight matrix (the tensor factorization based algorithm of Janzamin et al. \cite{Janzamin15} can easily be seen to be SQ). Our
results imply that this dependence cannot be substantially improved (see
Section~\ref{sec:related}).
\begin{figure}
    \centering
    \begin{subfigure}[t]{0.4\textwidth}
        \includegraphics[width=\textwidth]{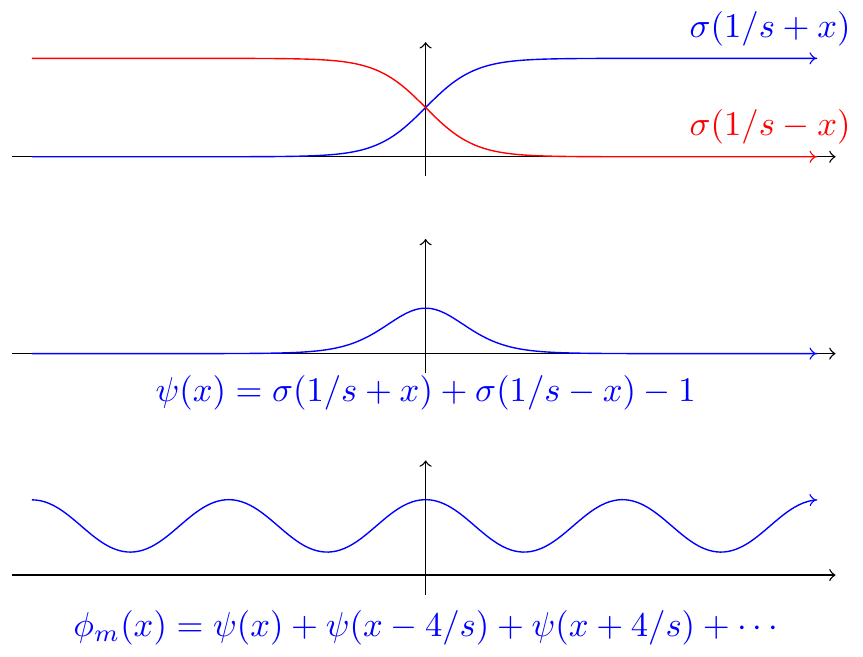}
    \caption{The sigmoid function, the $L^1$-function $\psi$ constructed from
        sigmoid functions, and the nearly-periodic ``wave'' function $\phi$
        constructed from $\psi$.}
    \end{subfigure}%
    \hspace{0.1\textwidth}
    \begin{subfigure}[t]{0.4\textwidth}
        \includegraphics[width=\textwidth]{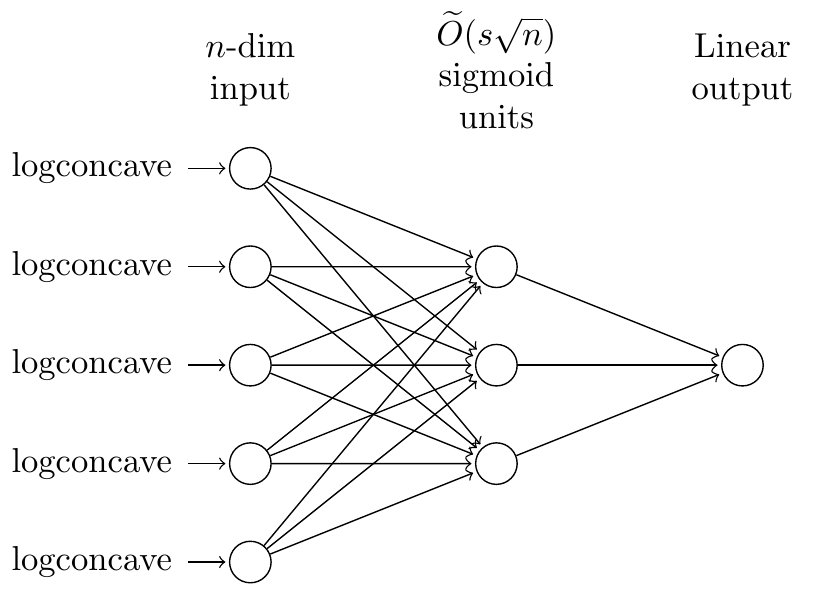}
        \caption{The architecture of the NNs computing the wave functions.}
    \end{subfigure}
    \caption{The wave function and its NN.}\label{fig:phi}
\end{figure}


\begin{remark}
    The class of input distributions can be relaxed further. Rather than
    being a product distribution, it suffices if the distribution is in
    isotropic position and invariant under reflections across and permutations
    of coordinate axes.  And instead of being logconcave, it suffices for
    marginals to be unimodal with variance $\sigma$, density $O(1/\sigma)$ at
    the mode, and density $\Omega(1/\sigma)$ within a standard deviation of the
    mode.
\end{remark}

Overall, our lower bounds suggest that even the combination of small network
size, smooth, standard activation functions, and benign input distributions is
insufficient to make learning a NN easy, even improperly via a very general
family of algorithms. Instead, stronger structural assumptions on the NN, such
as a small condition number, and very strong
structural properties on the input distribution, are necessary to make learning
tractable. It is our hope that these insights will guide the discovery of provable efficiency guarantees.

\subsection{Related Work}\label{sec:related}

There is much work on complexity-theoretic hardness of learning neural
networks~\cite{BR92, Daniely16, Klivans16}. These results have shown
the hardness of learning functions representable as small (depth $2$) neural
networks over discrete input distributions. Since these input distributions
bear little resemblance to the real-world data sets on which NNs have seen
great recent empirical success, it is natural to wonder whether more realistic
distributional assumptions might make learning NNs tractable. Our results
suggest that benign input distributions are insufficient, even for
functions realized as small networks with standard, smooth activation units.

Recent independent work of Shamir~\cite{Shamir16} shows a smooth family of
functions for which the gradient of the squared loss function is not
informative for training a NN over a Gaussian input distribution (more
generally, for distributions with rapidly decaying Fourier coefficients). In
fact, for this setting the paper shows an exponentially small bound on the
gradient, relying on the fine structure of the Gaussian distribution and of the
smooth functions (see~\cite{Shalev17} for a follow-up with experiments and
further ideas). These smooth functions cannot be realized in small NNs using
the most commonly studied activation units (though a related non-smooth family
of functions for which the bounds apply can be realized by larger NNs using
ReLU units). In contrast our bounds are (a) in the more general SQ framework,
and in particular apply regardless of the loss function, regularization scheme,
or specific variant of gradient descent (b) apply to functions actually
realized as small NNs using any of a wide family of activation units (c) apply
to any logconcave input distribution and (d) are robust to small perturbations
of the input layer weights. 

Also related is the tensor-based algorithm of Janzamin et al.~\cite{Janzamin15}
to learn a 1-layer network under nondegeneracy assumptions on the weight
matrix. The complexity is polynomial in the dimension, size of network being
learned and condition number of the weight matrix. Since their tensor
decomposition can also be implemented as a statistical query algorithm, our
results give a lower bound indicating that such a polynomial dependence on the
dimension and condition number is unavoidable. 

Other algorithmic results for learning NNs apply in very restricted settings.
For example, polynomial-time bounds are known 
for learning NNs with a single hidden ReLU layer over Gaussian
inputs under
the assumption that the hidden units use disjoint sets of inputs~\cite{bg17},
as well as for learning a single ReLU~\cite{GKKT16} and for learning sparse polynomials via
NNs~\cite{apvz14}.

\subsection{Proof ideas}

Our starting point is the observation is that NN training algorithms are
inherently statistical and can be simulated by $\VSTAT$.
This is because in each step a gradient is computed by
averaging over a batch of random examples. The number of samples needed in the
average depends only on the range of the functions being queried. In addition
to these queries, the algorithm is allowed to perform any other
computations that do not query the input. 

One way to prove a lower bound on the number of queries is to exhibit neural
networks approximating parity functions.  Since parity of an unknown subset is
hard-to-learn for SQ algorithms, this would give a lower bound in the setting
of neural networks as well.  However, the resulting bound on tolerance is much
worse since the discrete parity function would have to be approximated by a
continuous function.

Instead, we directly estimate the {\em statistical dimension} of the family of
$s$-wave functions. Statistical dimension is a
key concept in the study of SQ algorithms, and is known to
characterize the query complexity of supervised learning via SQ
algorithms~\cite{Blum+94,Szorenyi:09,feldman2013statistical}. Briefly, a family $\mcC$ of distributions (e.g., over
labeled examples) has statistical dimension $d$ with average correlation
$\bar{\gamma}$ if every $(1/d)$-fraction of $\mcC$ has average correlation
$\bar{\gamma}$; this condition implies that $\mcC$ cannot be learned with fewer
than $O(d)$ queries to $\VSTAT(O(1/\bar{\gamma}))$. See
Section~\ref{sec:statdim} for precise statements.

The SQ literature for supervised learning of boolean functions is rich.
However, lower bounds for regression problems in the SQ framework have so far
not appeared in the literature, and the existing notions of statistical
dimension are too weak for this setting. We state a new, strengthened notion of
statistical dimension for regression problems (Definition~\ref{def:eps-sda}), and
show that lower bounds for this dimension transfer to query complexity bounds
(Theorem~\ref{thm:sda}). The essential difference from the 
statistical dimension for learning is that we must additionally bound the average
covariances of \emph{indicator functions} (or, rather, continuous analogues of
indicators) on the outputs of functions in $\mcC$.
The essential claim in our lower bounds is therefore in showing that a typical
pair of (indicator functions on outputs of) $s$-wave functions has small
covariance.

Let $f: \bbR \to \bbR$ and let $S, T \subseteq \{1,\ldots, n\}$. Suppose
$S\setminus T$ and $T \setminus S$ are both large, and $D$ is a product
distribution on $\bbR^n$. Then
\begin{align*}
    &\E_{(x_1,\ldots, x_n) \sim D}(f(\sum_{i \in S} x_i)f(\sum_{i \in T} x_i)
    \mid \sum_{i \in S\cap T} x_i = z)  \\
    &=
    \E_{x_i, i \in S\setminus T}(f(\sum_{i \in
    S\setminus T} x_i + z))\E_{x_i, i \in T\setminus S}(f(\sum_{i \in
    T\setminus S} x_i + z))\,.
\end{align*}
So it suffices to show that the expectation of $f(\sum_{i \in S} x_i)$ doesn't
change much when we condition on the value of $z = \sum_{i \in S\cap T} x_i$.

We make the following observation: suppose $f$, like an
indicator function composed with an $s$-wave functions, is ``close to'' a
periodic function with period $\theta > 0$ (see Section~\ref{sec:quasiperiodic} for
a precise statement). Then for any logconcave distribution $D'$ on $\bbR$ of
variance $\sigma > \theta$, and any translation $z \in \bbR$, we have
\[
    \left|\E_{x \sim D}(f(x + z) - f(x))\right| =
    O\left(\frac{\theta}{\sigma}\right)\E_{x \sim D}(|f(x)|)
\]

In particular, conditioning on the value of $z = \sum_{i \in S\cap T} x_i$ has
little effect on the value of $f(\sum_{i \in S} x_i)$. The combination of these
observations gives the query complexity lower bound. Precise statements of some
of the technical lemmas are given in Section~\ref{sec:periodic}, and the complete
proof appears in Section~\ref{sec:proofs}.

\section{The complexity of learning smooth one-layer networks}

\subsection{Statistical dimension}\label{sec:statdim}
We now give a precise definition of the \emph{statistical dimension with
average correlation} for regression problems, extending the concept introduced 
in~\cite{feldman2013statistical}.

Let $\mcC$ be a finite family of functions $f: X \to \bbR$ over some domain
$X$, and let $D$ be a distribution over $X$.
The average covariance and the average correlation of $\mcC$ with respect to
$D$ are
\[
    \Cov_D(\mcC) = \frac{1}{|\mcC|^2} \sum_{f,g \in \mcC} \Cov_D(f,g) \quad \textrm{
        and } \quad 
    \rho_D(\mcC) = \frac{1}{|\mcC|^2}\sum_{f,g \in \mcC} \rho_D(f,g)
\]
where $\rho_D(f,g) = \Cov_D(f,g)/\sqrt{\Var(f)\Var(g)}$ when both $\Var(f)$ and
$\Var(g)$ are nonzero, and $\rho_D(f,g) = 0$ otherwise.


 

For $y \in \bbR$ and $\eps > 0$, we define the \textbf{$\eps$-soft indicator function}
$\chi_y^{(\eps)}: \bbR \to \bbR$ as
\[
\chi_y^{(\eps)}(x) = \chi_y(x) = \max\{0,1/\eps - (1/\eps)^2|x-y|\}. 
\]
So $\chi_y$ is $(1/\eps)^2$-Lipschitz, is
supported on $(y-\eps, y+\eps)$, and has norm $\|\chi_y\|_1 = 1$.

\begin{definition}\label{def:eps-sda}
    Let $\bar{\gamma} > 0$, let $D$ be a probability distribution over some
    domain $X$, and let $\mcC$ be a family of functions $f:X \to [-1,1]$ that
    are identically distributed as random variables over $D$.
    The \textbf{statistical dimension} of $\mcC$ relative to $D$ with average
    covariance $\bar{\gamma}$ and precision $\eps$, denoted by $\eSDA(\mcC, D,
    \bar{\gamma})$, is defined to be the largest integer $d$ such that
    the following holds:
    for every $y \in \bbR$ and every subset $\mcC' \subseteq \mcC$ of size
    $|\mcC'| > |\mcC|/d$, we have $\rho_D(\mcC') \le \bar{\gamma}$.
    Moreover, $\Cov_D(\mcC_y') \le (\max\{\eps, \mu(y)\})^2\bar{\gamma}$ where $\mcC_y' = \{\chi_y^{(\eps)} \circ f : f \in \mcC\}$ and
    $\mu(y) = E_D(\chi_y^{(\eps)} \circ f)$ for some $f \in
    \mcC$. 
\end{definition}
Note that the parameter $\mu(y)$ is independent of the choice of $f \in \mcC$. The application of this notion of dimension is given by the following theorem.



\begin{theorem}\label{thm:sda}
    Let $D$ be a distribution on a domain $X$ and let $\mcC$ be a family of
    functions $f: X \to [-1,1]$ identically distributed as random variables
    over $D$. Suppose there is $d \in \bbR$ and $\lambda \ge 1 \ge \bar{\gamma}
    > 0$ such that  $\eSDA(\mcC, D, \bar{\gamma}) \ge d$, where $\eps \le
    \bar{\gamma}/(2\lambda)$.
    Let $A$ be a randomized algorithm learning $\mcC$ over $D$ with probability
    greater than $1/2$ to regression error less than $\Omega(1) -
    2\sqrt{\bar{\gamma}}$. If $A$ only
    uses queries to $\VSTAT(t)$ for some $t = O(1/\bar{\gamma})$, which are
    $\lambda$-Lipschitz at any fixed $x \in X$, then $A$ uses $\Omega(d)$
    queries.
\end{theorem}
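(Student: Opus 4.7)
The plan is to follow the decision-tree adversary framework for SQ lower bounds, as in Feldman et al.\ for classification, suitably adapted to regression via the soft-indicator decomposition. For each $f \in \mcC$, let $D_f$ denote the distribution over labeled pairs $(x, f(x))$ with $x \sim D$; a query is a function $q \colon X \times \bbR \to [0,1]$ that is $\lambda$-Lipschitz in $y$ for every fixed $x$, and $\VSTAT(t)$ must respond with some $v$ satisfying $|v - p_f| \le \max(1/t, \sqrt{p_f(1-p_f)/t})$, where $p_f = \E_{D_f}(q)$.

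First I would reduce each Lipschitz query to a combination of soft-indicator queries. Placing nodes $y_i = i\eps$ on $\bbR$, the triangles $\{\eps\, \chi_{y_i}^{(\eps)}\}_{i \in \bbZ}$ form a partition of unity, so pointwise
\[
q(x, y) \;=\; \eps \sum_i q(x, y_i)\, \chi_{y_i}^{(\eps)}(y) \;+\; O(\eps\lambda),
\]
and integrating against $D_f$ gives $p_f = \eps \sum_i \E_D\bigl(q(x, y_i)(\chi_{y_i}^{(\eps)} \circ f)(x)\bigr) + O(\eps\lambda)$. Under the hypothesis $\eps \le \bar\gamma/(2\lambda)$, the discretization error is $O(\bar\gamma)$, comfortably within the $\VSTAT(t)$ tolerance for $t = O(1/\bar\gamma)$; so $p_f$ is a bounded linear combination of the soft-indicator expectations that Definition~\ref{def:eps-sda} directly governs.

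The main argument is now a standard adversary. Initialize $\mcC_0 = \mcC$; on the $j$th query $q_j$, have the oracle return $v_j$ equal to the mean of $p_f$ over $f \in \mcC_{j-1}$, and set $\mcC_j \subseteq \mcC_{j-1}$ to be those $f$ for which $v_j$ is a valid $\VSTAT(t)$ response. The core quantitative estimate is $|\mcC_j|/|\mcC_{j-1}| \ge 1 - 1/(2d)$ whenever $|\mcC_{j-1}| > |\mcC|/d$. By a Chebyshev-type bound (adapted to the two regimes of the $\VSTAT$ tolerance $\tau_j$), the fraction of $f \in \mcC_{j-1}$ with $|p_f - v_j| > \tau_j$ is controlled by the variance of $p_f$ over $f \sim \mcC_{j-1}$. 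Expanding this variance via Step 1 decomposes it into a double sum of soft-indicator covariances $\Cov_D(\chi_{y_i}^{(\eps)} \circ f,\, \chi_{y_j}^{(\eps)} \circ g)$ weighted by $q(x, y_i)\, q(x, y_j)$; by Definition~\ref{def:eps-sda} each such covariance is at most $(\max\{\eps, \mu(y_i)\})^2 \bar\gamma$, and summing with the bounded weights $q \in [0,1]$ together with the normalization $\eps \sum_i \mu(y_i) = \E_D(q(x, f(x)))$ gives $\Var(p_f) = O(\bar\gamma\, v_j(1 - v_j) + \bar\gamma/t^2) = O(\bar\gamma\, \tau_j^2)$, yielding the desired shrinkage.

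After $k = \Omega(d)$ queries, $|\mcC_k| > |\mcC|/d$, so by the $\eSDA$ hypothesis $\rho_D(\mcC_k) \le \bar\gamma$. Since functions in $\mcC$ are identically distributed, this forces the average squared $L^2$ distance between $f, g \in \mcC_k$ to be at least $2\,\Var_D(f)(1 - \bar\gamma) = \Omega(1) - O(\bar\gamma)$. By the triangle inequality, a single hypothesis $h$ can be within regression error $\Omega(1) - 2\sqrt{\bar\gamma}$ of at most an $o(1)$ fraction of $\mcC_k$; averaging over a uniformly random true concept in $\mcC_k$ (all of which are consistent with the adversary's responses) gives failure probability strictly greater than $1/2$, contradicting the hypothesis on $A$. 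Thus $A$ must make $\Omega(d)$ queries. The main obstacle is the variance computation in Step 3 --- tracking the Riemann-sum bookkeeping carefully enough that both branches of the $\VSTAT(t)$ tolerance are handled simultaneously, and that the $O(\eps\lambda)$ discretization error is absorbed exactly where the relation $\eps \le \bar\gamma/(2\lambda)$ permits.
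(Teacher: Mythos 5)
Your overall architecture matches the standard SQ adversary framework that the paper also follows, and Step~1 (the soft-indicator decomposition of a Lipschitz query, absorbing an $O(\eps\lambda) = O(\bar\gamma)$ discretization error) is a discrete Riemann-sum analogue of the paper's Proposition~\ref{prop:query-est}. The critical divergence, and where a genuine gap appears, is in Step~3.

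You propose a Chebyshev argument: bound $\Var_{f\sim\mcC_{j-1}}(p_f)$ by $O(\bar\gamma\tau_j^2)$ and conclude that few $f$ have $|p_f - v_j|>\tau_j$. But the quantity $\Var_f(p_f)$ is a \emph{variance over concepts $f$}, and the $\eSDA$ hypothesis only controls $\Cov_D(\mcC_y')$, i.e.\ the variance \emph{over $x\sim D$} of the averaged soft indicator $\frac{1}{|\mcC'|}\sum_f \chi_y\circ f$. These are not the same object, and the reduction you sketch --- ``decomposes it into a double sum of soft-indicator covariances $\Cov_D(\chi_{y_i}^{(\eps)}\circ f,\chi_{y_j}^{(\eps)}\circ g)$'' --- does not go through. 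Two concrete obstructions: (i) the $\eSDA$ bound only covers the diagonal $y_i=y_j$ case, so the cross-terms $\Cov_D(\chi_{y_i}\circ f,\chi_{y_j}\circ g)$ for $y_i\neq y_j$ are simply not controlled by Definition~\ref{def:eps-sda}; (ii) even within a fixed $y$, the average covariance bound $\Cov_D(\mcC_y')\le\bar\gamma(\max\{\eps,\mu(y)\})^2$ is a statement about the sum over all $|\mcC'|^2$ ordered pairs --- individual diagonal terms $\Var_D(\chi_y\circ f)$ can be of order $\mu(y)$, far larger than $\bar\gamma\mu(y)^2$, and they dominate any pointwise extraction. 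Expanding $(p_f-\bar p)^2$ and trying to bound termwise therefore fails.

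The paper's proof avoids this entirely by not forming a second moment over $f$: it fixes a \emph{concept-independent} oracle response $v=\int_T\E_D(h(x,y))\mu(y)\,dy$ and bounds the signed first moment $\sum_{f\in\mcC'}(p_f-v)$. The one Cauchy--Schwarz it needs is taken over $x\sim D$, inside a fixed $y$, so that $\sum_{f,g}\langle\chi_y\circ f-\mu(y),\,\chi_y\circ g-\mu(y)\rangle_D=|\mcC'|^2\Cov_D(\mcC_y')$ appears in exactly the form the $\eSDA$ hypothesis supplies; the $y$-integral is then handled by Jensen with weight $\mu(y)$, producing $\sqrt{\bar\gamma v}$. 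The Markov-type step then applies this linear bound to the subfamily on which $v$ would be an invalid $\VSTAT(t)$ answer. To fix your proof you would need to retreat to this one-sided first-moment argument (or else establish a second-moment version of $\eSDA$, which is a strictly stronger hypothesis than the one the theorem assumes). Steps~2 and~4 of your outline (adversary bookkeeping, and the $L^2$-separation argument showing that no single hypothesis can approximate most surviving concepts) are fine and line up with what the paper does in its opening paragraph of the proof.
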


A version of the theorem for Boolean functions is proved
in~\cite{feldman2013statistical}.
For completeness, we include a proof of the version used in this
paper, following ideas in~\cite[Theorem 2]{Szorenyi:09}. The proof is given in
Section~\ref{sec:sda-proof}.

As a consequence of Theorem~\ref{thm:sda}, there is no need to consider an SQ
algorithm's query strategy in order to obtain lower bounds on its query
complexity. Instead, the lower bounds follow directly from properties of the
concept class itself, in particular from bounds on average covariances of
indicator functions. Theorem~\ref{thm:main} will therefore follow from
Theorem~\ref{thm:sda} by analyzing the statistical dimension of the $s$-wave
functions.

\section{Statistical dimension of one-layer functions}\label{sec:periodic}

We now present the most general context in which we obtain SQ lower bounds.

A function $\phi: \bbR \to \bbR$ is \textbf{$(M,\delta,\theta)$-quasiperiodic} if
there exists a function $\tilde{\phi}: \bbR \to \bbR$ which is periodic with
period $\theta$ such that $|\phi(x)-\tilde{\phi}(x)| < \delta$ for all $x \in
[-M,M]$. In particular, any periodic function with period $\theta$ is
$(M,\delta,\theta)$-quasiperiodic for all $M, \delta > 0$.

\begin{lemma}\label{lem:periodic-sda}
    Let $n \in \bbN$ and let $\theta > 0$. There exists $\bar{\gamma} =
    O(\theta^2/n)$ such that for all $\eps > 0$, there exist $M =
    O(\sqrt{n}\log(n/(\eps\theta))$ and $\delta = \Omega(\eps^3\theta/\sqrt{n})$ and a family
    $\mcC_0$ of affine functions $g: \bbR^n \to \bbR$ of bounded operator norm
    with the following property.
    Suppose $\phi: \bbR \to [-1,1]$ is
    $(M,\delta,\theta)$-quasiperiodic and
    $\Var_{x \sim U(0,\theta)}(\phi(x)) = \Omega(1)$.
    Let $D$ be logconcave distribution with unit variance on $\bbR$. 
    Then for $\mcC = \{\phi \circ g : g \in \mcC_0\}$, we have
    $\eSDA(\mcC, D^n, \bar{\gamma}) \ge 2^{\Omega(n)}\eps\theta^2$.
    Furthermore, the functions of $\mcC$ are
    identically distributed as random variables over $D^n$.
\end{lemma}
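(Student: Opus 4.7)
The plan is to set $\mathcal{C}_0 = \{g_S : S \subseteq \{1,\ldots,n\},\, |S| = \lceil n/2 \rceil\}$ with $g_S(x) = \sum_{i \in S} x_i$, giving $|\mathcal{C}_0| = 2^{\Omega(n)}$ affine functions, each of operator norm $O(\sqrt{n})$. Under $D^n$, every $g_S(x)$ has the same distribution (a sum of $\lceil n/2 \rceil$ i.i.d.\ copies of $D$), so every $\phi \circ g_S$ is identically distributed, as required. I would take $M$ so that $\Pr_{x \sim D^n}[|g_S(x)| > M] \le O(\eps\theta/n)$, which by the subgaussian tail of sums of unit-variance logconcave variables is achieved at $M = \Theta(\sqrt{n}\log(n/(\eps\theta)))$, matching the lemma's budget.

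The core estimate is the covariance bound for a pair $S, T$ with $k = |S \cap T|$. Condition on $z = \sum_{i \in S \cap T} x_i$; then $u := \sum_{i \in S \setminus T} x_i$ and $v := \sum_{i \in T \setminus S} x_i$ are independent logconcave sums of $\lceil n/2 \rceil - k$ i.i.d.\ coordinates, so $\mathbb{E}[\phi(g_S(x))\phi(g_T(x)) \mid z] = \mathbb{E}_u[\phi(u+z)]\,\mathbb{E}_v[\phi(v+z)]$. Since $u$ has standard deviation $\Theta(\sqrt{n - 2k}) \gg \theta$, the quasiperiodic shift estimate announced in the proof ideas gives $|\mathbb{E}_u[\phi(u+z) - \phi(u)]| = O(\theta/\sqrt{n - 2k})$, up to error terms controlled by the quasiperiodicity slack $\delta$ and by the $\{|u+z| > M\}$ tail. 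Subtracting the product of unconditional marginals and integrating over $z$ yields $\mathrm{Cov}(\phi \circ g_S, \phi \circ g_T) = O(\theta^2/(n - 2k))$, which is $O(\theta^2/n) = \bar{\gamma}$ whenever $|S \cap T| \le 3n/8$.

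To promote this to the claimed $\eSDA$ lower bound, I would use that for uniformly random size-$\lceil n/2 \rceil$ subsets the intersection is hypergeometrically concentrated near $n/4$, so the ``bad'' pairs with $|S \cap T| > 3n/8$ form a $2^{-\Omega(n)}$ fraction of all pairs. Hence any $\mathcal{C}' \subseteq \mathcal{C}$ with $|\mathcal{C}'| \ge |\mathcal{C}|/d$ for $d = 2^{\Omega(n)}\eps\theta^2$ still has an overwhelming fraction of good pairs contributing covariance $O(\theta^2/n)$, while the bad pairs contribute at most $O(1) \cdot 2^{-\Omega(n)} d$ on average, which is dominated by $\bar\gamma$ after tuning the hidden constants. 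Dividing by $\mathrm{Var}(\phi \circ g_S) = \Omega(1)$ -- which follows from the one-period variance hypothesis together with the fact that the distribution of $g_S(x)$ covers $\Theta(M/\theta)$ periods with nearly-uniform mass modulo $\theta$ -- promotes this to the required correlation bound $\rho_{D^n}(\mathcal{C}') \le \bar\gamma$.

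For the soft-indicator clause I would reuse the same conditioning identity with $\chi_y^{(\eps)} \circ \phi$ in place of $\phi$. Since $\chi_y^{(\eps)}$ is $(1/\eps)^2$-Lipschitz and bounded by $1/\eps$, the composed function $\chi_y^{(\eps)} \circ \phi$ is still $\theta$-quasiperiodic on $[-M,M]$ with slack of order $\delta/\eps^2$, and its one-period $L^1$-mass scales as $\Theta(\max\{\eps, \mu(y)\})$; this $L^1$-mass enters the covariance bound quadratically, producing exactly the required $(\max\{\eps, \mu(y)\})^2$ prefactor. The calibration $\delta = \Theta(\eps^3\theta/\sqrt{n})$ is chosen so that after the $\eps^{-2}$ blow-up of the slack, the quasiperiodicity error remains below the target covariance. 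The main obstacle I expect is precisely this bookkeeping: propagating the three error sources -- the quasiperiodicity slack $\delta$, the $M$-tail, and the finite-number-of-periods averaging error -- through both the conditional expectation over $z$ and the soft-indicator composition, and verifying that at the stated settings of $M$ and $\delta$ each of them remains dominated by the leading $O((\max\{\eps,\mu(y)\})^2\theta^2/n)$ covariance term.
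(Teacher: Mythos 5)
Your proposal follows the paper's proof in all of its essential technical content: the same conditioning on $z=\sum_{i\in S\cap T}x_i$ to factor the conditional expectation, the same shift-invariance estimate for quasiperiodic functions under logconcave smoothing (the paper's Lemmas~\ref{lem:shift-approx}--\ref{lem:combined-approx}), the same $\max\{\eps,\mu(y)\}$ calibration arising from the $(1/\eps)^2$-Lipschitz blow-up of the soft indicator, and the same source of the $\eps\theta^2$ loss in $d$ (the diagonal/error terms). The one genuine divergence is the choice of $\mcC_0$: you take \emph{all} $\lceil n/2\rceil$-subsets and then argue that pairs with intersection above $3n/8$ can be absorbed by counting, whereas the paper uses Proposition~\ref{prop:good-family} to fix a subfamily $\mcF$ of size $2^{cn}$ in which \emph{every} distinct pair already has intersection below $(1/2-c)n$, so no bad pairs exist and the average-correlation bound follows from the uniform pairwise bound alone. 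Your route is workable, but the absorption step needs to be done per-row (for each fixed $S$, at most a $2^{-\beta n}$ fraction of all $T$ are bad, hence at most a $2^{-\beta n}d$ fraction of any $\mcC'$ with $|\mcC'|\ge|\mcC|/d$), and each bad pair contributes covariance up to $O(\mu(y)/\eps)$ in the soft-indicator case rather than $O(1)$; this forces the exponent in $d=2^{\Omega(n)}$ to be taken smaller than the hypergeometric large-deviation rate and costs additional factors of $\eps$ and $\bar\gamma$. The paper's code-family construction buys a cleaner argument and a better exponent at the price of a (one-line probabilistic) existence claim; your version avoids that claim but must track the bad-pair bookkeeping you correctly identify as the main remaining obstacle.
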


In other words, we have statistic dimension bounds (and hence query complexity
bounds) for functions that are sufficiently close to periodic.
However, the activation units of interest are generally monotonic increasing functions
such as sigmoids and ReLUs that are quite far from periodic. Hence, in order to
apply Lemma~\ref{lem:periodic-sda} in our context, we must show that the
activation units of interest can be combined to make nearly periodic functions.

In order to state and prove our results in
a general framework, we analyze as an intermediate step functions in
$L^1(\bbR)$, i.e., functions whose absolute value has bounded integral over the
whole real line. These $L^1$-functions analyzed in our framework are
themselves constructed as affine combinations of the usual activation
functions. For example, for the sigmoid unit with sharpness $s$, we study the
function
\[
\psi(x) = \sigma\left(\frac{1}{s}+x\right) + \sigma\left(\frac{1}{s} - x\right) - 1.
\]
The definition of our hard-to-learn family $f_{m,S}$ is exactly in this form (\ref{fmS}).

We now describe the properties of the integrable functions $\psi$ that will be used in the proof.

\begin{definition}
    For $\psi \in L^1(\bbR)$, we say the \textbf{essential radius} of $\psi$ is
    the number $r \in \bbR$ such that $\int_{-r}^r|\psi| = (5/6)\|\psi\|_1$.
\end{definition}

\begin{definition}\label{def:mbp}
    We say $\psi \in L^1(\bbR)$ has the \textbf{mean bound property} if for all
    $x \in \bbR$ and $\eps > 0$, we have 
    \[
        \psi(x) = O\left(\frac{1}{\eps}\int_{x-\eps}^{x+\eps} |\psi(x)|\right).
    \]
\end{definition}

In particular, if $\psi$ is bounded, and monotonic nonincreasing (resp.\
nondecreasing) for sufficiently large positive (resp.\ negative) inputs, then
$\psi$ satisfies Definition~\ref{def:mbp}. Alternatively, it suffices for
$\psi$ to have bounded first derivative.


To complete the proof of Theorem~\ref{thm:main}, we show that we can combine
activation units $\psi$ satisfying the above properties in a function which is
close to periodic, i.e., which satisfies the hypotheses of
Lemma~\ref{lem:periodic-sda} above.

\begin{lemma}\label{lem:l1-quasiperiodic}
    Let $\psi \in L^1(\bbR)$ have the mean bound property and let $r > 0$ be
    such that $\psi$ has essential radius at most $r$ and $\|\psi\|_1 =
    \Theta(r)$. Let $M,\delta > 0$. Then there is a pair of affine functions $h: \bbR^m
    \to \bbR$ and $g: \bbR \to \bbR^m$ such that if $\phi(x) = h(\psi(g(x)))$,
    where $\psi$ is applied component-wise, then $\phi$ is $(M,\delta,
    4r)$-quasiperiodic. Furthermore, $\phi(x) \in [-1,1]$ for all $x \in \bbR$,
    and $\Var_{x \sim U(0,4r)}(\phi(x)) = \Omega(1)$,
    and we may take $m = (1/r)\cdot O(\max\{m_1, M\})$, where $m_1$ satisfies
    \[
        \int_{m_1}^\infty (|\psi(x)| + |\psi(-x)|)dx < 4\delta r\,.
    \]
\end{lemma}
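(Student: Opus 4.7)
}

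The plan is to build $\phi$ as a truncated sum of translates of $\psi$ at spacing $4r$. Concretely, pick $K$ to be the smallest integer with $4r(K+1) - M - r \ge m_1$, so $K = O(\max(m_1,M)/r)$, and set
\[
    g(x) = (x - 4rk)_{k=-K}^{K}\,, \qquad
    h(y_{-K},\ldots,y_{K}) = C\sum_{k=-K}^{K} y_k + c\,,
\]
for constants $C,c$ to be fixed below. Then $\phi(x)=h(\psi(g(x)))=C\sum_{k=-K}^{K}\psi(x-4rk)+c$, and the number of hidden units is $m=2K+1=(1/r)\cdot O(\max(m_1,M))$ as required. The companion ``idealized'' object is the full bi-infinite sum $\tilde{\phi}(x)=C\sum_{k\in\bbZ}\psi(x-4rk)+c$, which is manifestly periodic with period $4r$.

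I would carry out the analysis in four steps. First I would verify that $\tilde\phi$ is well-defined and uniformly bounded in $x$. Applying the mean bound property at scale $\eps=r$ gives $|\psi(x-4rk)|=O(r^{-1})\int_{x-4rk-r}^{x-4rk+r}|\psi|$, and the intervals $[x-4rk-r,x-4rk+r]$ are disjoint across $k$ (spacing $4r$, width $2r$), so their contributions sum to $O(r^{-1})\|\psi\|_1=O(1)$. Second, I would bound the truncation error: for $|x|\le M$ and $|k|>K$, the interval $[x-4rk-r,x-4rk+r]$ lies in $\{|y|\ge 4r(K+1)-M-r\}\subseteq\{|y|\ge m_1\}$, so by the same mean-bound argument
\[
    \Bigl|\sum_{|k|>K}\psi(x-4rk)\Bigr|\;=\;O\!\left(r^{-1}\int_{|y|\ge m_1}|\psi|\right)\;=\;O(\delta)\,,
\]
using the hypothesis $\int_{m_1}^\infty(|\psi|+|\psi(-\cdot)|)<4\delta r$. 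After multiplying by the constant $C$, this gives $|\phi(x)-\tilde{\phi}(x)|=O(\delta)$ on $[-M,M]$, which is $(M,O(\delta),4r)$-quasiperiodicity (constants absorbed by adjusting $K$).

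Third, the uniform boundedness of the full sum (shown in step one) also bounds the truncated sum, so I can choose $C=\Theta(1)$ and $c$ (the negative mean of $\tilde\phi$ over one period) to force $\phi(x)\in[-1,1]$ for \emph{all} $x\in\bbR$, not merely on $[-M,M]$. Fourth, I would establish the variance lower bound. By periodicity and the $O(\delta)$ truncation error, $\Var_{U(0,4r)}(\phi)$ agrees up to $O(\delta)$ with $\Var_{U(-2r,2r)}(\tilde\phi)$. On $[-2r,2r]$, the full sum decomposes as $C\psi(x)+C\sum_{k\ne 0}\psi(x-4rk)+c$; by the essential radius hypothesis, the ``other translates'' contribute total $L^1$-mass at most $\int_{|y|\ge 2r}|\psi|\le\tfrac16\|\psi\|_1=O(r)$ (with a small constant), and by the mean bound property they are also $O(1)$ pointwise. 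Meanwhile, Cauchy--Schwarz on $[-r,r]$ gives
\[
    \int_{-r}^{r}\psi^2 \;\ge\; \frac{\bigl(\int_{-r}^{r}|\psi|\bigr)^2}{2r}\;\ge\;\frac{(5/6)^2\|\psi\|_1^2}{2r}\;=\;\Theta(r)\,,
\]
so the unnormalized second moment of $\tilde\phi$ on one period dominates the squared mean with constant slack, yielding $\Var_{U(0,4r)}(\tilde\phi)=\Omega(1)$ after dividing by $4r$.

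The main obstacle I expect is the fourth step. The small-constant ($1/6$ versus $5/6$) split from the essential radius gives only a constant-factor separation, and the cross-terms between $\psi$ and the other translates can a priori shift both the mean and the second moment by $\Theta(1)$. The argument must track these contributions carefully: I would use the pointwise $L^\infty$ bound from the mean bound property to estimate $\int(\psi)(\text{other translates})$ via Cauchy--Schwarz, showing the cross contribution is small relative to the main $\Theta(r)$ mass of $\psi^2$. Once the constants are pinned down, the variance gap survives normalization by the $O(1)$ sup-norm of $\tilde\phi$, giving the stated $\Omega(1)$ bound.
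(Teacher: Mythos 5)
Your construction is the same as the paper's: $\phi$ is the truncated periodization $F^{(m)}_{\psi,4r}(x)=\sum_{|k|\le K}\psi(x-4rk)$ (rescaled and shifted), and your steps one through three reproduce, correctly, the paper's Lemma~\ref{lem:Fpsi-sup} (uniform boundedness via the mean bound property and disjointness of the intervals $[x-4rk-r,x-4rk+r]$) and Lemma~\ref{lem:truncated-approx} (the $O(\delta)$ truncation error for $|x|\le M$), with the right count $m=(1/r)\cdot O(\max\{m_1,M\})$.

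The gap is in step four, and it is quantitative rather than conceptual. Your chain is: Cauchy--Schwarz gives $\frac{1}{4r}\int_{-r}^{r}\psi^2\ge\frac{(5/6)^2\|\psi\|_1^2}{8r^2}=\frac{25}{288}\cdot\frac{\|\psi\|_1^2}{r^2}$, while the squared mean is at most $\bigl(\frac{\|\psi\|_1}{4r}\bigr)^2=\frac{18}{288}\cdot\frac{\|\psi\|_1^2}{r^2}$, so there is a margin of $\frac{7}{288}\cdot\frac{\|\psi\|_1^2}{r^2}$ --- but only for $\psi$ itself, not for $\tilde\phi$. To transfer the second-moment bound to $\tilde\phi$ you must absorb the cross term $2\int_{-r}^{r}|\psi|\,|E|$ where $E=\sum_{k\ne0}\psi(\cdot-4rk)$, and your proposed bound for it is $\|\psi\|_\infty\cdot\int_{-r}^r|E|\le O(\|\psi\|_1/r)\cdot\frac{1}{6}\|\psi\|_1$. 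The implicit constant here comes from the mean bound property and is not under your control; it only needs to exceed roughly $0.3$ for the cross term to swallow the $\frac{7}{288}$ margin entirely. So the step ``the second moment dominates the squared mean with constant slack'' does not follow from what you have written, and no amount of care with the same estimates will fix it, because the loss is in an unquantified $O(\cdot)$.

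The repair --- which is what the paper's Lemma~\ref{lem:uniform-variance-lb} does --- is to avoid pointwise bounds on the error term and compare $L^1$ masses of the periodized function $F=F_{\psi,4r}$ on the two halves of the period directly. The triangle inequality gives exact constants: $A_0:=\int_{-r}^{r}|F|\ge\int_{-r}^r|\psi|-\int_{|y|\ge r}|\psi|\ge\frac{5}{6}\|\psi\|_1-\frac{1}{6}\|\psi\|_1=\frac{2}{3}\|\psi\|_1$, while $A_1:=\int_{r}^{3r}|F|\le\int_{|y|\ge r}|\psi|\le\frac{1}{6}\|\psi\|_1$, and $A_0+A_1\le\|\psi\|_1$ by Fubini. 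Then Cauchy--Schwarz on each half separately yields
\[
\Var_{U(0,4r)}(F)\;\ge\;\frac{A_0^2+A_1^2}{8r^2}-\frac{(A_0+A_1)^2}{16r^2}\;=\;\frac{(A_0-A_1)^2}{16r^2}\;\ge\;\frac{\|\psi\|_1^2}{64r^2}\;=\;\Omega(1),
\]
with no dependence on the mean-bound constant; the $\Omega(1)$ survives the $\Theta(1)$ rescaling that puts the range in $[-1,1]$ and the $O(\delta)$ truncation. In short: the mechanism that makes the variance bound work is the \emph{lopsidedness} of $|F|$ across the period (most mass on one half, little on the other), not a raw second-moment-versus-squared-mean comparison on a single subinterval.
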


We now sketch how Lemmas~\ref{lem:periodic-sda} and~\ref{lem:l1-quasiperiodic}
imply Theorem~\ref{thm:main} for sigmoid units. 

\begin{proof}[Sketch of proof of Theorem~\ref{thm:main}]
    The sigmoid function $\sigma$ with sharpness $s$ is not even in
    $L^1(\bbR)$, so it is unsuitable as the function $\psi$ of
    Lemma~\ref{lem:l1-quasiperiodic}. Instead, we define $\psi$ to be an affine
    combination of $\sigma$ gates, namely
    \[
        \psi(x) = \sigma\left(\frac{1}{s} + x\right) + \sigma\left(\frac{1}{s}
        - x\right) - 1.
    \]
    Then $\psi$ satisfies the hypotheses of Lemma~\ref{lem:l1-quasiperiodic}.

    Let $\theta = 4r$ and let $\bar{\gamma} = O(\theta^2/n)$ be as given by the statement of
    Lemma~\ref{lem:periodic-sda}. Let $\eps = \bar{\gamma}/(2\lambda)$, and let 
    $M = O(\sqrt{n}\log(n/(\eps\theta))$ and $\delta = \Omega(\eps^3\theta/\sqrt{n})$
    be as given by the statement of Lemma~\ref{lem:periodic-sda}. By
    Lemma~\ref{lem:l1-quasiperiodic}, there is
    $m \in \bbN$ and 
    functions $h: \bbR^m \to \bbR$ and $g: \bbR \to \bbR^m$ such that
    $\phi = h \circ \psi \circ g$ is $(M,\delta,\theta)$-quasiperiodic and
    satisfies the hypotheses of Lemma~\ref{lem:periodic-sda}.
    Therefore, we have a family $\mcC_0$ of affine functions $f: \bbR^n \to \bbR$ such
    that for $\mcC = \{\phi \circ f : f \in \mcC_0\}$ satisfies
    $\eSDA(\mcC, D, \bar{\gamma}) \ge 2^{\Omega(n)}\eps\theta^2$.
    Therefore, the functions in $\mcC$ satisfy the hypothesis of
    Theorem~\ref{thm:sda}, giving the query complexity lower bound.
    
    The details are deferred to Section~\ref{sec:proofs}.
\end{proof}

\subsection{Different activation functions}\label{sec:other-activation}

Similar proofs give corresponding lower bounds for activation functions other
than sigmoids. In every case, we reduce to gates satisfying the hypotheses of
Lemma~\ref{lem:l1-quasiperiodic} by constructing an appropriate $L^1$-function
$\psi$ as an affine combination of
of the activation functions.

For example, let $\sigma(x) = \sigma_s(x) = \max\{0, sx\}$ denote the ReLU unit
with slope $s$. Then the affine combination
\begin{equation}\label{eq:relu}
    \psi(x) = \sigma(x + 1/s) - \sigma(x) + \sigma(-x + 1/s) - \sigma(-x) - 1
\end{equation}
is in $L^1(\bbR)$, and is zero for $|x| \ge 1/s$ (and hence has the mean bound
property and essential radius $O(1/s)$). The proof of Theorem~\ref{thm:main}
therefore goes through almost identically the slope-$s$ ReLU units replacing
the $s$-sharp sigmoid units. In particular, there is a family of single hidden
layer NNs using $O(s\sqrt{n}\log(\lambda sn)$ slope-$s$ ReLU units, which is
not learned by any SQ algorithm using
fewer than $2^{\Omega(n)}/(\lambda s^2)$ queries to $\VSTAT(O(s^2 n))$, when
inputs are drawn i.i.d.\ from a logconcave distribution.

Similarly, we can consider the \emph{$s$-sharp softplus} function $\sigma(x) =
\log(\exp(sx)+1)$. Then Eq.~\eqref{eq:relu} again gives an appropriate
$L^1(\bbR)$ function to which we can apply Lemma~\ref{lem:l1-quasiperiodic} and
therefore follow the proof of Theorem~\ref{thm:main}. For softsign functions
$\sigma(x) = x/(|x|+1)$, we use the affine combination
\[
    \psi(x) = \sigma(x + 1) + \sigma(-x+1)\,.
\]
In the case of softsign functions, this function $\psi$ converges much more
slowly to zero as $|x| \to \infty$ compared to sigmoid units. Hence, in order
to obtain an adequate quasiperiodic function as an affine combination of
$\psi$-units, a much larger number of $\psi$-units is needed: the bound on the
number $m$ of units in this case is polynomial in the Lipschitz parameter
$\lambda$ of the query functions, and a larger polynomial in the input
dimension $n$. The case of other commonly used activation functions, such as ELU (exponential
linear) or LReLU (Leaky ReLU), is similar to those discussed above. 

\section{Experiments}\label{sec:expts}
In the experiments, we show how the errors, $\mathbb{E}(f(x) - y)^2$, change
with respect to the sharpness parameter $s$ and the input dimension $n$ for two
input distributions: 1) multivariate normal distribution, 2) coordinate-wise
independent $\exp(-|x_i|)$,
and 3) uniform in the $l_1$ ball $\{x : \sum_i |x_i| \le n\}$. 

For a given sharpness parameter $s \in \{0.01, 0.02, 0.05, 0.1, 0.2, 0.5, 1, 2\}$, input dimension $d \in \{50, 100, 200\}$ and input distribution, we generate the true function according to Eqn.~\ref{fmS}. There are a total of 50,000 training data points and 1000 test data points.
We then learn the true function with fully-connected neural networks of both ReLU and sigmoid activation functions. The best test error is reported among the following different hyper-parameters.

The number of hidden layers we used is 1, 2, and 4. The number of hidden units per layer varies from $4n$ to $8n$. The training is carried out using SGD with 0.9 momentum, and we enumerate learning rates from 0.1, 0.01 and 0.001 and batch sizes from 64, 128 and 256.

%
%

From Theorem~\ref{thm:main}, learning such functions should become difficult as $s\sqrt{n}$ increases over a threshold.
In Figure~\ref{fig:exp}, we illustrate this phenomenon. Each curve corresponds to a particular input dimension $n$ and each point in the curve corresponds to a particular smoothness parameter $s$. The x-axis is $s\sqrt{n}$ and the y-axis denotes the test errors.
We can see that at roughly $s\sqrt{n} = 5$, the problem becomes hard even empirically.


\begin{figure}
\center
\begin{subfigure}[b]{0.3\textwidth}
\includegraphics[width=\textwidth]{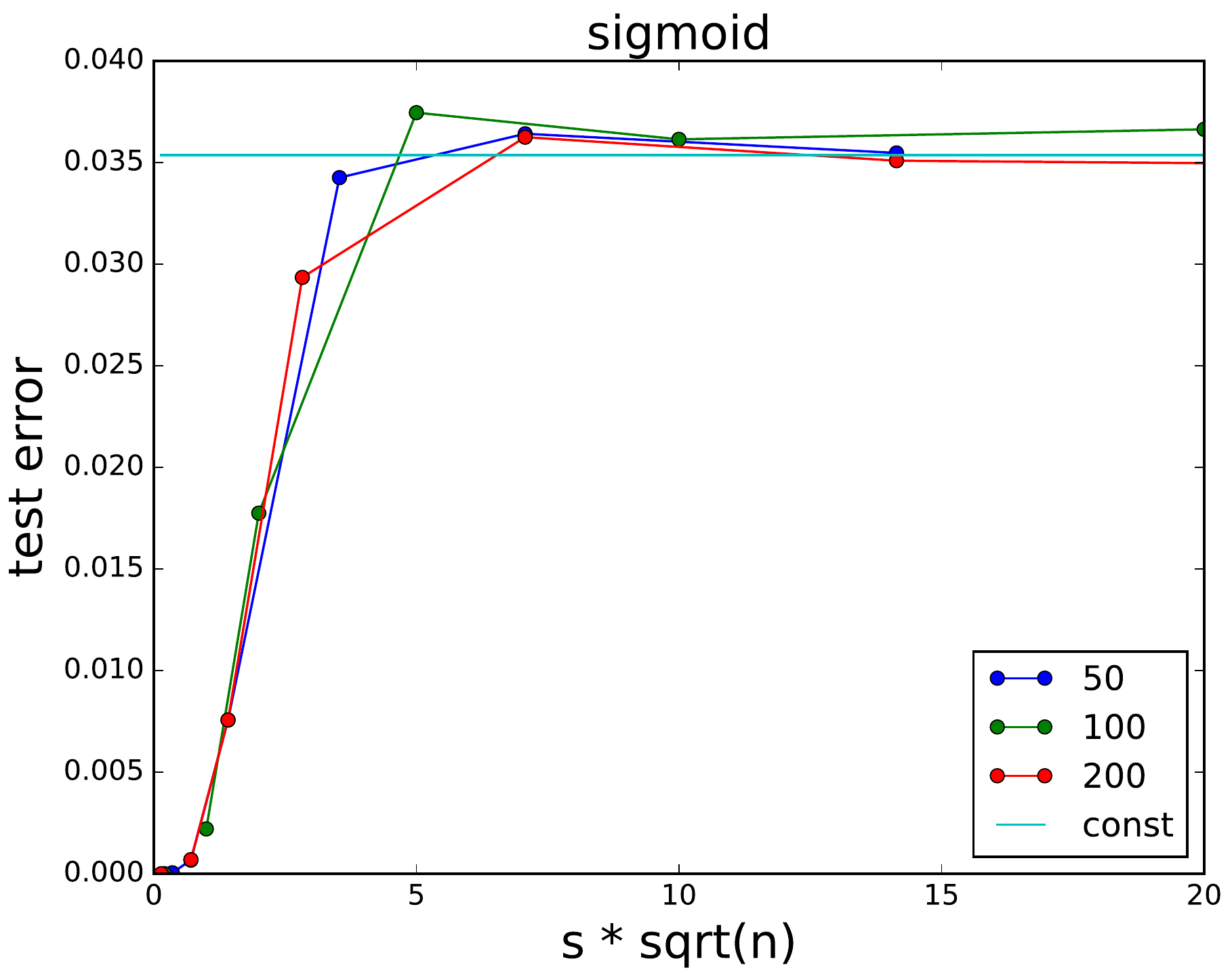}
\caption{normal distribution}
\end{subfigure}%
\begin{subfigure}[b]{0.3\textwidth}
\includegraphics[width=\textwidth]{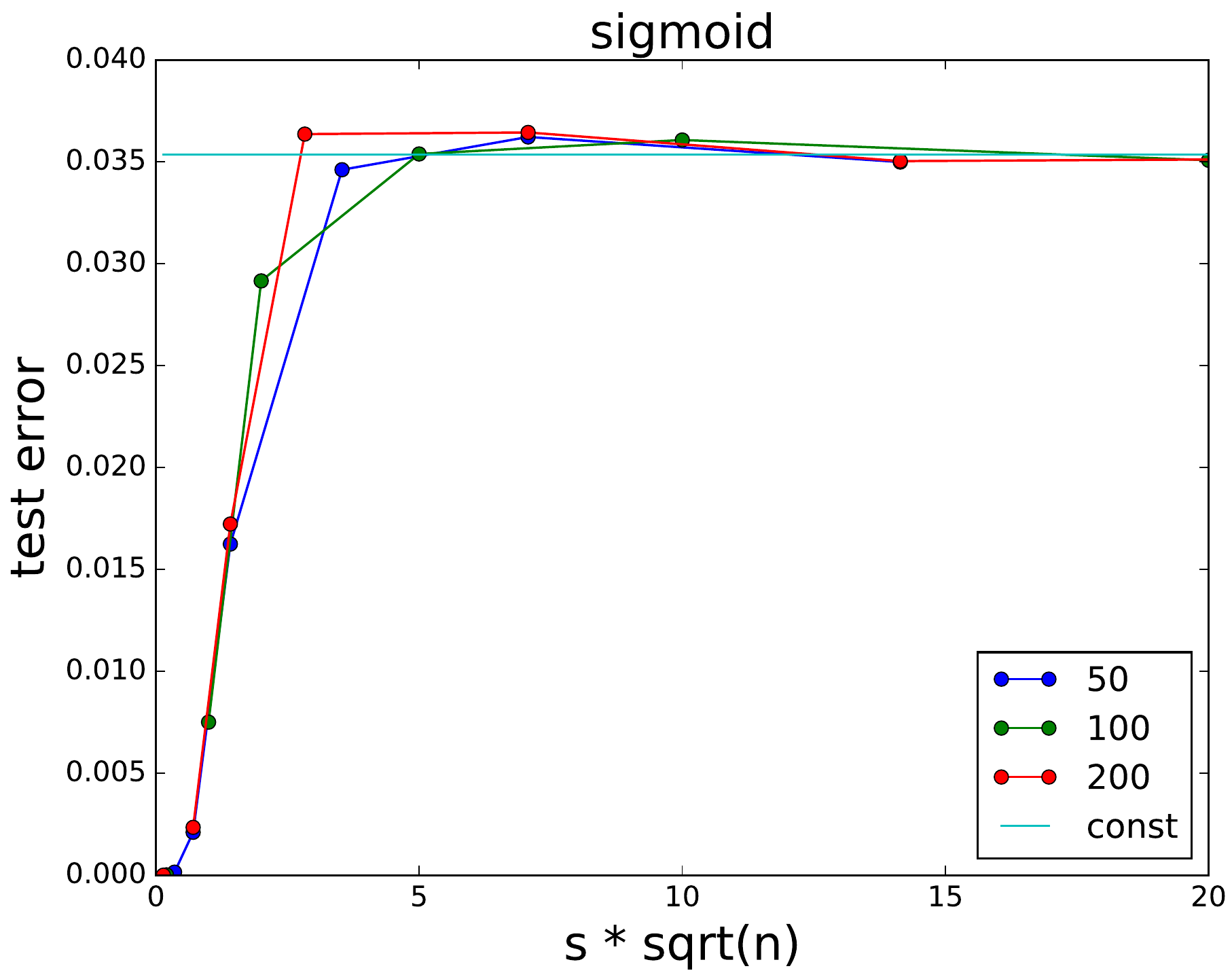}
\caption{$\exp(-|x_i|)$ distribution}
\end{subfigure}
\begin{subfigure}[b]{0.3\textwidth}
\includegraphics[width=\textwidth]{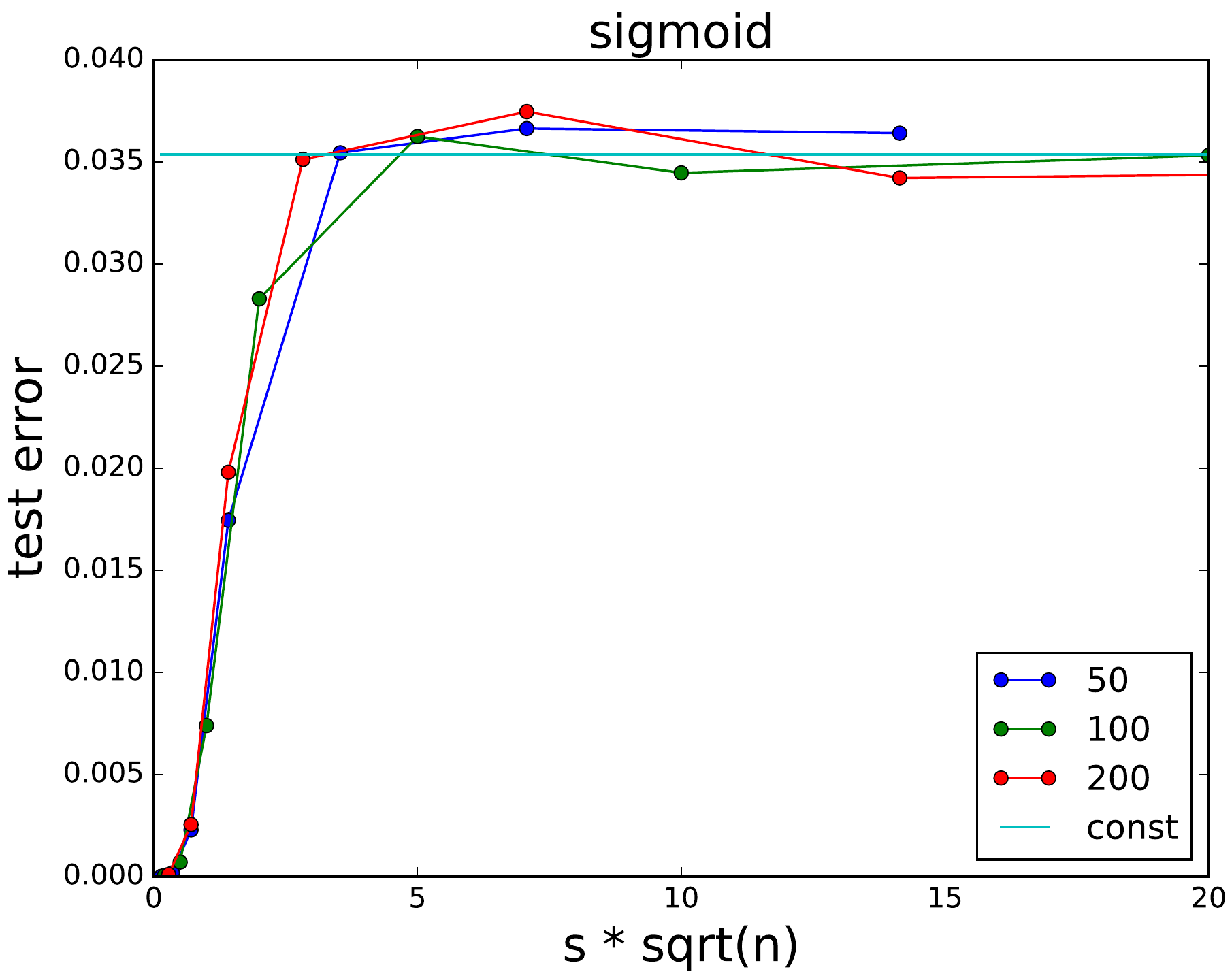}
\caption{uniform $l_1$ ball}
\end{subfigure}
\begin{subfigure}[b]{0.3\textwidth}
\includegraphics[width=\textwidth]{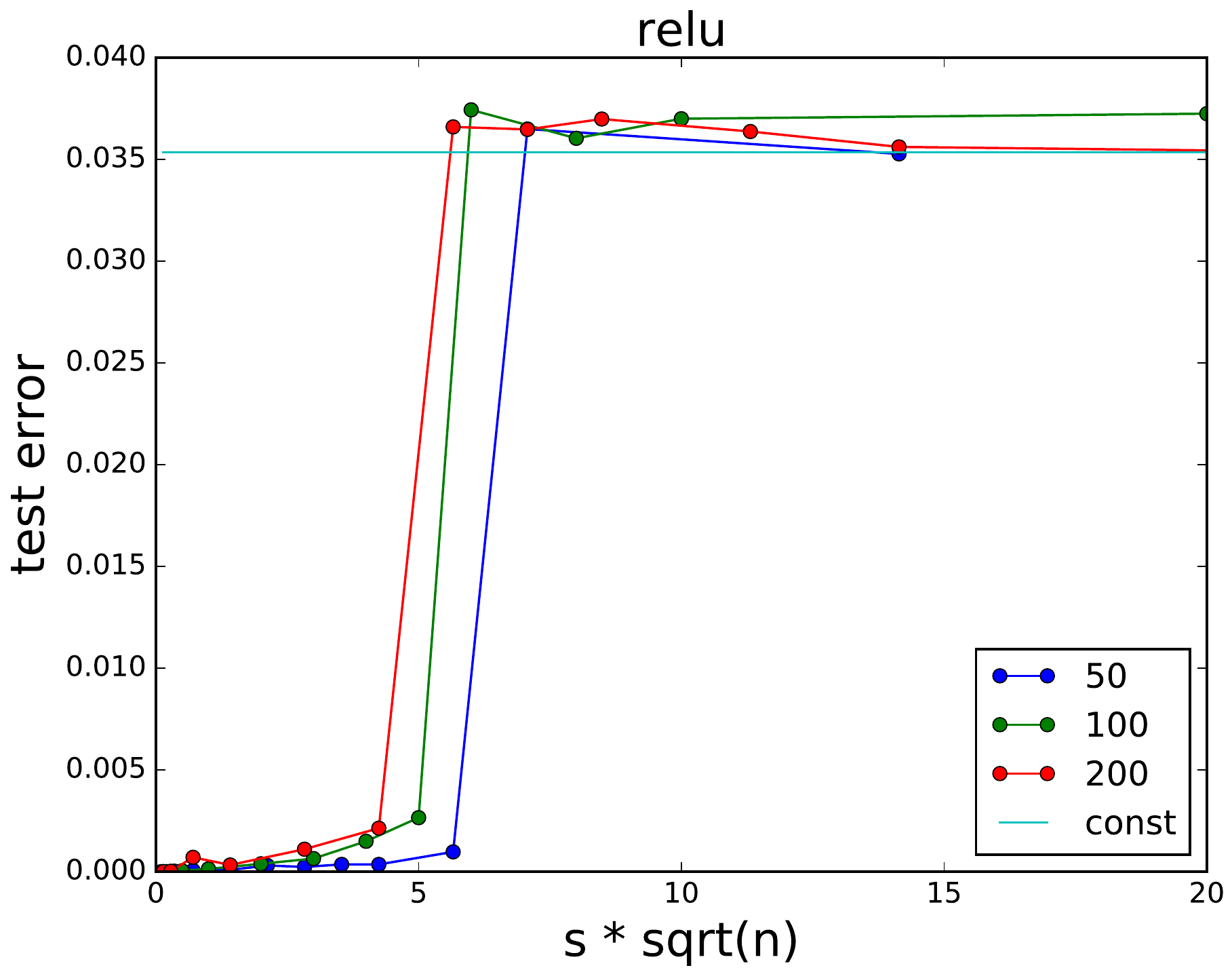}
\caption{normal distribution}
\end{subfigure}%
\begin{subfigure}[b]{0.3\textwidth}
\includegraphics[width=\textwidth]{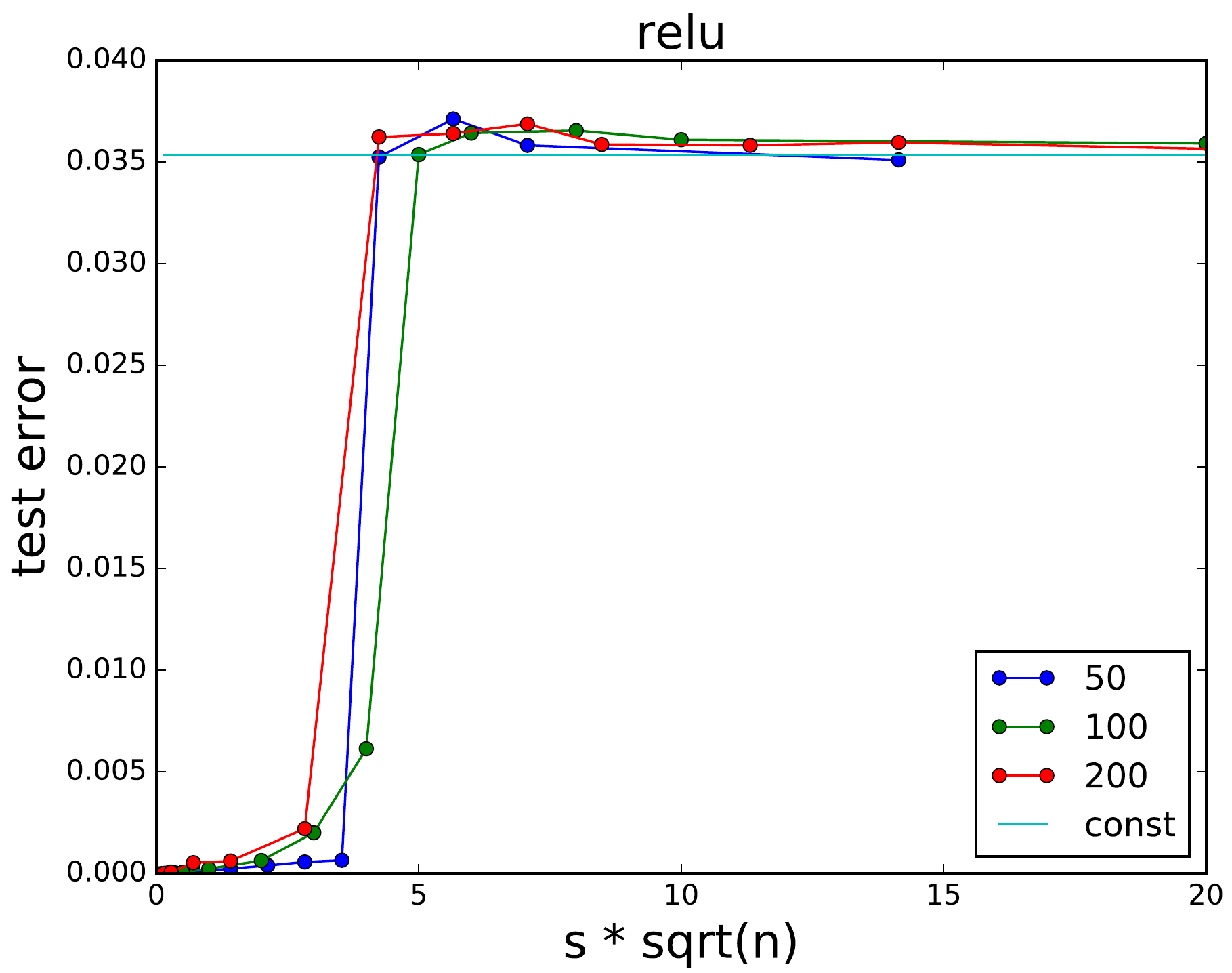}
\caption{$\exp(-|x_i|)$ distribution}
\end{subfigure}%
\begin{subfigure}[b]{0.3\textwidth}
\includegraphics[width=\textwidth]{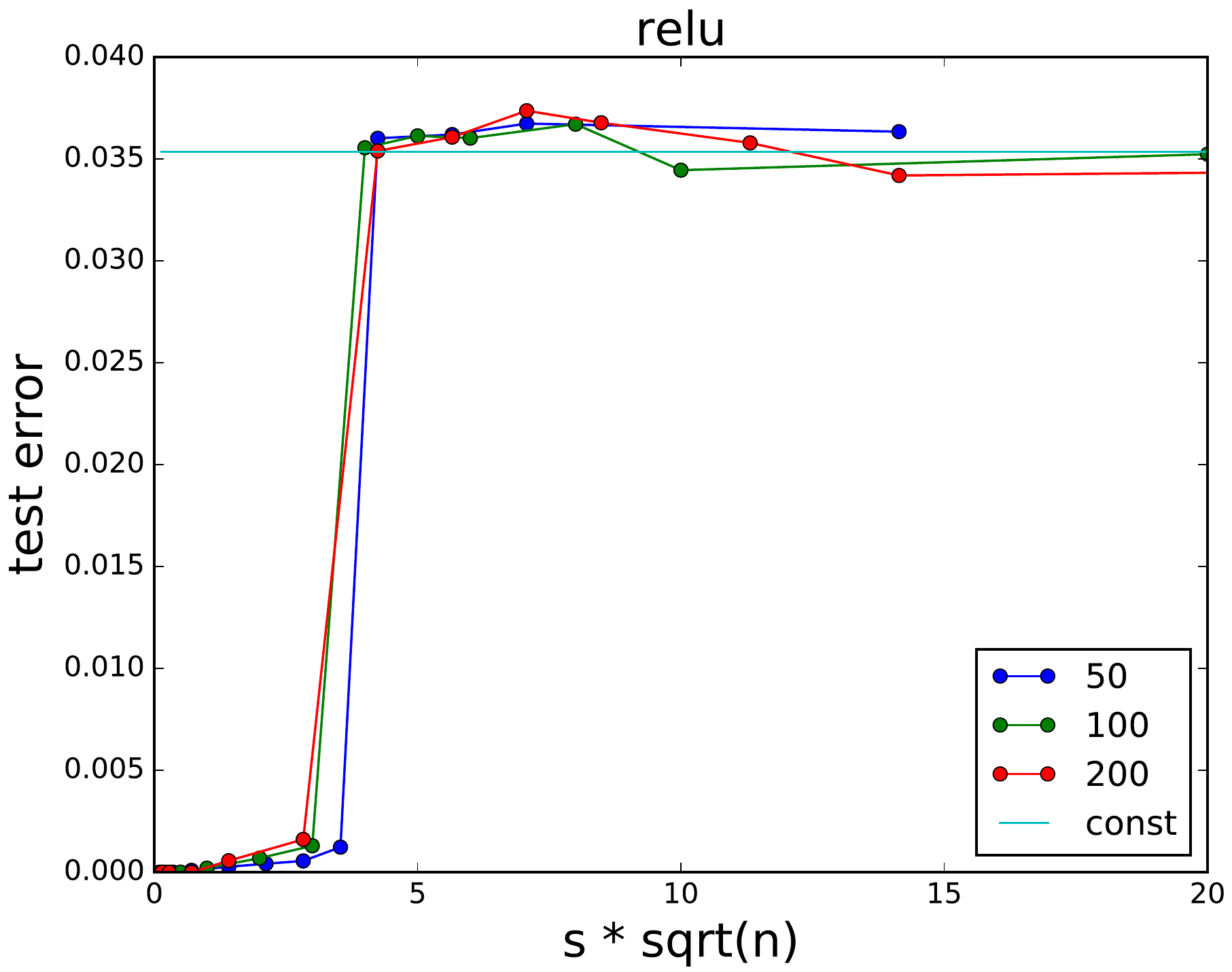}
\caption{uniform $l_1$ ball}
\end{subfigure}%
\caption{Test error vs sharpness times square-root of dimension. Each curve corresponds to a different input dimension $n$. The flat line corresponds to the best error by a constant function.}
\label{fig:exp}
\end{figure}


\section{Complete proof of Theorem~\ref{thm:main}}\label{sec:proofs}
 
\subsection{Statistical dimension with periodic activations}\label{sec:quasiperiodic}

We now prove Lemma~\ref{lem:periodic-sda}.

\begin{lemma}\label{lem:shift-approx}
    Let $\phi:\bbR \to \bbR$ be periodic with period $\theta$. 
    Let $D$ be a probability distribution on $\bbR$ with a unimodal density
    function $f$. Then for any $z \in \bbR$,
    \[
        \left|\E_{x \sim D}(\phi(x) - \phi(x-z))\right| \le
        O\left(\|f\|_{\infty}\right)\int_0^{\theta}|\phi(x)|dx\,.
    \]
\end{lemma}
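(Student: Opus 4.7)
The plan is to reduce the claim to a Riemann-sum style estimate that exploits the period-$\theta$ structure of $\phi$ and the unimodality of $f$. First I would substitute $x \mapsto x+z$ in one of the two integrals to write $\E_{x \sim D}(\phi(x) - \phi(x-z)) = \int_\bbR \phi(x) g(x)\,dx$, where $g(x) := f(x) - f(x+z)$. Two properties of $g$ drive the proof: (i) $g \in L^1$ with $\int g = 0$, since $f$ and its translate are both probability densities; and (ii) $g$ has bounded total variation, namely $\mathrm{TV}(g) \le 2\,\mathrm{TV}(f) \le 4\|f\|_\infty$. The last bound uses unimodality of $f$: it climbs monotonically from $0$ up to its maximum $\|f\|_\infty$ and then descends monotonically, contributing total variation at most $2\|f\|_\infty$.

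Next, I would partition $\bbR$ into intervals $I_k = [k\theta, (k+1)\theta)$, pick a representative $x_k \in I_k$, and split $g = g(x_k) + (g - g(x_k))$ on each $I_k$. Periodicity of $\phi$ gives $\int_{I_k} \phi = \int_0^\theta \phi$ and $\int_{I_k} |\phi| = \int_0^\theta |\phi|$, so the integral decomposes as $\int \phi g\,dx = \bigl(\int_0^\theta \phi\bigr)\sum_k g(x_k) + \sum_k \int_{I_k} \phi(x)\bigl(g(x) - g(x_k)\bigr)\,dx$.

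For the first summand, the identity $\theta \sum_k g(x_k) - \int g = \sum_k \int_{I_k}\bigl(g(x_k) - g(x)\bigr)\,dx$, together with $|g(x_k) - g(x)| \le \mathrm{TV}(g, I_k)$ on $I_k$, gives $\bigl|\theta \sum_k g(x_k) - \int g\bigr| \le \theta\,\mathrm{TV}(g)$; combined with $\int g = 0$ this yields $|\sum_k g(x_k)| \le \mathrm{TV}(g) \le 4\|f\|_\infty$, and the trivial bound $|\int_0^\theta \phi| \le \int_0^\theta |\phi|$ controls the first summand by $4\|f\|_\infty \int_0^\theta |\phi|$. For the second summand, $|g(x) - g(x_k)| \le \mathrm{TV}(g, I_k)$ on $I_k$ and the periodicity identity for $|\phi|$ yield $\sum_k \mathrm{TV}(g, I_k)\int_0^\theta |\phi| = \mathrm{TV}(g)\int_0^\theta |\phi| \le 4\|f\|_\infty \int_0^\theta |\phi|$. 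Adding the two contributions gives the stated bound with an absolute constant of $8$.

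The main potential snag is purely book-keeping: justifying that $\sum_k g(x_k)$ converges absolutely (which follows from $g \in L^1 \cap BV$, since $\theta \sum_k |g(x_k)| \le \int |g| + \theta\,\mathrm{TV}(g) < \infty$), and treating the case where the unimodal $f$ is merely of bounded variation rather than smooth — but in that case the inequality $\mathrm{TV}(f) \le 2\|f\|_\infty$ still holds verbatim from monotonicity on each side of the mode. No analytic difficulty beyond this is anticipated; the argument is self-contained and does not require differentiability of either $\phi$ or $f$.
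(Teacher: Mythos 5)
Your proof is correct and takes a genuinely different route from the paper's. The paper's argument fixes the mode of $f$ at $0$, reduces to $0 < z < \theta$ by periodicity, bounds $|f(x+t)-f(x+t+z)|$ for $0 \le x \le \theta$ pointwise by a three-case comparison in $t$ (using monotonicity on either side of the mode), and then telescopes the sum over $k\theta$-translates to obtain a bound $(3f(0)+f(\theta)+f(-\theta))\int_0^\theta|\phi|$. Your argument instead sets $g = f - f(\cdot+z)$, uses $\int g = 0$, and packages the unimodality hypothesis once and for all as $\mathrm{TV}(g) \le 2\,\mathrm{TV}(f) \le 4\|f\|_\infty$; the Riemann-sum decomposition $g = g(x_k) + (g - g(x_k))$ on $\theta$-blocks then gives two terms, each controlled by $\mathrm{TV}(g)\int_0^\theta|\phi|$. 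The net effect is the same bound with explicit constant $8$, but your route avoids both the reduction to $z \in (0,\theta)$ and the case analysis, and it isolates the precise feature of $f$ being used (bounded total variation, with $\mathrm{TV}(f)\le 2\|f\|_\infty$ a consequence of unimodality) rather than manipulating the density directly. This modularity means your proof would extend verbatim to any density $f$ of bounded total variation, with $\mathrm{TV}(f)$ replacing $2\|f\|_\infty$, which is a mild generalization the paper's phrasing does not make visible. Your handling of the absolute convergence of $\sum_k g(x_k)$ via $\theta\sum_k|g(x_k)|\le\int|g|+\theta\,\mathrm{TV}(g)$ is the right book-keeping and closes the only real gap one might worry about.
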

\begin{proof}
    Since $\phi$ has period $\theta$, by redefining $z$ we may assume without loss
    of generality that $0 < z < \theta$, and that $f$ achieves its mode at $0$.

    By the unimodality of $f$, for any $0 \le x \le \theta$ and any $t \in
    \bbR$ we have
    \[
        |f(x+t) - f(x+t+z)| \le \left\{\begin{matrix} f(t) - f(t+2\theta) & t
            \ge 0 \\
            f(t+2\theta) - f(t) & t \le -2\theta \\
            f(0) - \min\{f(2\theta), f(-2\theta)\} & -2\theta \le t \le
        0\end{matrix}\right.
    \]
    We estimate
    \begin{align*}
        \left|\E_{x \sim D}(\phi(x) - \phi(x-z))\right| &=
        \left|\int_{\bbR}(\phi(x)f(x) - \phi(x-z)f(x))dx\right| \\
        &\le \int_{\bbR}|\phi(x)|\left|f(x) - f(x+z)\right|dx \\
        &= \sum_{k \in \bbZ}\int_0^{\theta}|\phi(x)|\left|f(x+k\theta) - f(x+z
        + k\theta)\right|dx \\
        &\le \left(\left(\sum_{k=0}^{\infty}f(k\theta) -
        f((k+2)\theta)\right)
        + \left(\sum_{k=-\infty}^{0}f(k\theta) -
        f((k-2)\theta)\right)
        + f(0)\right)\int_0^{\theta}|\phi(x)|dx \\
        &= (3f(0) + f(\theta) + f(-\theta))\int_0^{\theta}|\phi(x)|dx \\
        &= O(\|f\|_{\infty})\int_0^{\theta}|\phi(x)|dx\,.
    \end{align*}
\end{proof}

\begin{lemma}\label{lem:quasiperiod-integral}
    Let $\phi: \bbR \to \bbR$ be periodic of period $\theta$, and let $D$ be a
    logconcave distribution on $\bbR$ with variance $\sigma > \theta$. Then
    \[
        \int_0^{\theta}|\phi(x)|dx \le O(\theta)\E_{x \sim
        D}(\phi(x))\,.
    \]
\end{lemma}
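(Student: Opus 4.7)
The plan is to rewrite the expectation so as to exploit the periodicity of $\phi$ directly. Denoting the density of $D$ by $f$, we partition $\bbR$ into translates of $[0,\theta)$ and use the $\theta$-periodicity of $|\phi|$ to obtain
\[
    \E_{x \sim D}(|\phi(x)|) = \int_{\bbR} |\phi(x)|\, f(x)\, dx = \int_0^\theta |\phi(x)|\, g(x)\, dx,
\]
where $g(x) = \sum_{k \in \bbZ} f(x+k\theta)$ is the periodization of $f$ restricted to $[0,\theta)$. (I am reading the right-hand side of the statement as $O(\theta)\,\E_D(|\phi(x)|)$, since $\phi$ can change sign.) Thus the lemma reduces to showing the pointwise lower bound $g(x) \ge \Omega(1/\theta)$ for every $x \in [0,\theta)$.

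To establish this bound I would invoke two standard facts about a one-dimensional logconcave density $f$ with variance $\sigma^2$: (i) $f$ is unimodal, with some mode $x_0$; and (ii) there exist absolute constants $c_1, c_2 > 0$ such that $f(y) \ge c_1/\sigma$ for every $y$ in the interval $J := [x_0 - c_2\sigma,\, x_0 + c_2\sigma]$. Item (ii) is just the standard lower bound for isotropic 1D logconcave densities, rescaled by $\sigma$; choosing the absolute constants suitably (or, equivalently, interpreting the hypothesis ``$\sigma > \theta$'' as ``$\sigma$ is at least a suitable absolute-constant multiple of $\theta$'') guarantees that $|J| \ge 2\theta$.

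For any $x \in [0,\theta)$, the arithmetic progression $\{x + k\theta : k \in \bbZ\}$ meets $J$ in at least $\lfloor |J|/\theta \rfloor = \Omega(\sigma/\theta)$ points, each contributing at least $c_1/\sigma$ to the defining sum of $g(x)$. Hence $g(x) \ge \Omega(\sigma/\theta) \cdot c_1/\sigma = \Omega(1/\theta)$, uniformly in $x \in [0,\theta)$. Substituting back,
\[
    \E_{x \sim D}(|\phi(x)|) \;\ge\; \Omega\!\left(\frac{1}{\theta}\right) \int_0^\theta |\phi(x)|\, dx,
\]
which rearranges to the desired inequality.

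The main obstacle is really just the quantitative matching of constants: one must use a version of the logconcave density lower bound in which the interval where $f \gtrsim 1/\sigma$ has length at least a fixed multiple of $\theta$, so that at least one full period of $\phi$ fits inside and the periodization receives $\Omega(\sigma/\theta)$ contributions per period. This is exactly where the hypothesis $\sigma > \theta$ is used; the rest of the argument is bookkeeping.
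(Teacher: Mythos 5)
Your argument is correct, and it lands on essentially the same key inequality as the paper's proof: both come down to showing that the periodization $g(x) = \sum_{k\in\bbZ} f(x+k\theta)$ of the density satisfies $g(x) = \Omega(1/\theta)$ uniformly on $[0,\theta)$. Your reading of the right-hand side as $\E_D(|\phi(x)|)$ rather than $\E_D(\phi(x))$ is also the correct one; that is the form in which the lemma is actually proved and applied. Where the two arguments differ is in how the lower bound on $g$ is extracted from logconcavity. The paper proceeds indirectly: after placing the mode of $f$ at $0$, unimodality gives the Riemann-sum inequality $\theta\bigl(f(0) + \sum_{k} f(k\theta)\bigr) \ge 1$; one then bounds $f(0) = O(f(x))$ for $x\in[0,\theta]$ (a ratio bound on the logconcave density at scales below $\sigma$) and $\sum_{k\ne 0} f(k\theta) \le \sum_k f(x+k\theta)$ (unimodality again), which together yield $g(x) = \Omega(1/\theta)$. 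You instead invoke the absolute density lower bound $f \ge \Omega(1/\sigma)$ on an interval of length $\Omega(\sigma)$, and count how many members of the arithmetic progression $\{x+k\theta\}_{k\in\bbZ}$ land in that interval. Both routes use unimodality plus one quantitative consequence of logconcavity; yours is arguably the more direct of the two.

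One small imprecision to fix: the interval $J$ on which $f \ge c_1/\sigma$ should not be taken symmetric about the mode $x_0$. For a one-sided logconcave density such as the (shifted) exponential, the mode lies at the boundary of the support, so $f$ vanishes on one side of $x_0$ and the symmetric interval fails. Centering $J$ at the mean instead (or simply asserting the existence of \emph{some} interval of length $\Omega(\sigma)$ on which $f \ge c_1/\sigma$, which is the correct form of the standard fact) repairs this; the rest of your counting argument goes through unchanged.
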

\begin{proof}
    Since the quantity $\int_I |\phi(x)|$ is the same for every interval $I$ of
    length $\theta$, we may assume without loss of generality
    that $f$ has its mode at $0$.  We compute
    \begin{align*}
        \int_0^{\theta}|\phi(x)|dx &\le
        \theta\left(f(0) + \sum_{k \in
        \bbZ}f(k\theta)\right)\left(\int_0^\theta|\phi(x)|dx\right) \\
        &= 2\theta\int_0^{\theta}|\phi(x)|f(0)dx +
        \theta\sum_{k \in \bbZ\setminus\{0\}}\int_0^{\theta}|\phi(x)|f(k\theta)
        \\
        &\le 2\theta\int_0^{\theta}|\phi(x)|\cdot O(f(x))dx +
        \theta\int_{\bbR}|\phi(x)|f(x)dx \\
        &= O(\theta)\int_{\bbR}|\phi(x)|f(x)dx = O(\theta)\E_{x \sim
        D}(|\phi(x)|)
    \end{align*}
\end{proof}

\begin{lemma}\label{lem:quasiperiod-approx}
    Let $\phi: \bbR \to [-1,1]$ be $(M,\delta,\theta)$-quasiperiodic, and let
    $\tilde{\phi}: \bbR \to \bbR$ be periodic of period $\theta$ such that
    $|\phi(x)-\tilde{\phi}(x)| < \delta$ for all $|x| \le M$. Suppose $D$ is a
    logconcave distribution on $\bbR$ with mean $\mu$ and variance $\sigma$ such
    that $M > \sigma(1+\log(1/\delta)) + |\mu|$. Then
    \[
        \left|\E_{x \sim D}(|\phi(x)|) - \E_{x \sim
        D}(|\tilde{\phi}(x)|)\right| = O(\delta)
    \]
\end{lemma}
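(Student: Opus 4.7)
The plan is to compare the expectations of $|\phi|$ and $|\tilde\phi|$ using the pointwise bound $||\phi(x)|-|\tilde\phi(x)|| \le |\phi(x)-\tilde\phi(x)|$, and split the expectation $\E_D(|\phi-\tilde\phi|)$ according to whether $|x|\le M$ or $|x|>M$. On the ``inside'' region the quasiperiodic closeness hypothesis immediately controls the integrand by $\delta$, so the only real work is showing that the ``outside'' region contributes $O(\delta)$ via the logconcave tail bound.

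First I would record that $\tilde\phi$ is uniformly bounded on all of $\bbR$: because $\tilde\phi$ has period $\theta$ and $|\phi-\tilde\phi|<\delta$ on $[-M,M]$ (a window that we may assume contains at least one full period, since $M \gg \sigma \gg \theta$ in the regime where the lemma is applied), the values of $\tilde\phi$ on one period lie in $[-1-\delta,1+\delta]$, hence $|\tilde\phi|\le 1+\delta$ globally. Thus $|\phi(x)-\tilde\phi(x)|\le 2+\delta = O(1)$ everywhere. Then
\[
\left|\E_D(|\phi|) - \E_D(|\tilde\phi|)\right| \le \E_D(|\phi-\tilde\phi|)
= \int_{|x|\le M}|\phi-\tilde\phi|\,dD + \int_{|x|>M}|\phi-\tilde\phi|\,dD.
\]
The first piece is at most $\delta\cdot\Pr_D(|x|\le M) \le \delta$ by the quasiperiodicity hypothesis, and the second is at most $O(1)\cdot \Pr_D(|x|>M)$ by the global bound above.

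The main step is then the tail bound. For a one-dimensional logconcave distribution with mean $\mu$ and standard deviation $\sigma$, standard isoperimetric/tail estimates give an exponential decay of the form $\Pr_D(|x-\mu|\ge t\sigma) \le e^{1-t}$ for $t\ge 1$ (for instance via a one-dimensional concentration argument for logconcave densities). Applied with $t = (M-|\mu|)/\sigma$, which by hypothesis satisfies $t > 1+\log(1/\delta)$, this yields
\[
\Pr_D(|x|>M) \le \Pr_D(|x-\mu|>M-|\mu|) \le e^{1-t} \le e^{-\log(1/\delta)} = \delta.
\]
Combining the two pieces gives $\E_D(|\phi-\tilde\phi|) = O(\delta)$, as required.

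The only subtle point I anticipate is sourcing the exact form of the logconcave tail bound with the right constants so that the hypothesis $M>\sigma(1+\log(1/\delta))+|\mu|$ precisely delivers a $\delta$-level tail mass. I would cite the standard one-dimensional logconcave concentration inequality (e.g., the Lov\'asz--Vempala-type tail estimate) rather than re-derive it, since the proof of Lemma~\ref{lem:quasiperiod-approx} itself is otherwise a clean triangle-inequality and region-splitting argument.
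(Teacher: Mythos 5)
Your proof is correct and follows essentially the same route as the paper's (which is a one-line version of the same split: bound the difference by $\delta$ on $[-M,M]$ and by the logconcave tail probability outside, with the hypothesis $M>\sigma(1+\log(1/\delta))+|\mu|$ chosen exactly so that the tail mass is $O(\delta)$). Your extra step verifying that $\tilde{\phi}$ is globally bounded by $1+\delta$ is a detail the paper leaves implicit, but it is the right justification for the $O(1)$ bound on the tail contribution.
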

\begin{proof}
    By the tail bound for logconcave distributions, 
    \[
        \left|\E_{x \sim D}(\phi(x)- \tilde{\phi}(x))\right|
        < 2 P_{x\sim D}(|x|
        > M) + \sup_{|x| \le M}(|\phi(x) - \tilde{\phi}(x)|) = O(\delta).
    \]
\end{proof}

\begin{lemma}\label{lem:combined-approx}
    Let $\phi : \bbR \to [-1,1]$ be $(M,\delta,\theta)$-quasiperiodic,
    and let $D$ be a logconcave probability distribution on $\bbR$ with
    variance $\sigma > \theta$ and mean $\mu$. Suppose $z \in \bbR$ is such that $|z| < M -
    \sigma(1+\log(1/\delta)) - |\mu|$. Then 
    \[
        \left|\E_{x \sim D}(\phi(x))
        - \E_{x \sim D}(\phi(x-z))\right| \le
        O\left(\delta + \frac{\theta}{\sigma}\cdot
        \E_{x \sim D}(|\phi(x)|)\}\right)\,.
    \]
\end{lemma}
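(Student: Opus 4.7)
The plan is to split the difference via the periodic approximant $\tilde\phi$ of $\phi$ and apply the three preceding lemmas in sequence. Let $\tilde\phi: \bbR\to\bbR$ be periodic of period $\theta$ with $|\phi(x)-\tilde\phi(x)|<\delta$ for $|x|\le M$, as guaranteed by the quasiperiodicity assumption. By the triangle inequality,
\[
\bigl|\E_{x\sim D}(\phi(x)) - \E_{x\sim D}(\phi(x-z))\bigr|
\le A_1 + A_2 + A_3,
\]
where $A_1 = |\E_D(\phi(x))-\E_D(\tilde\phi(x))|$, $A_2 = |\E_D(\tilde\phi(x))-\E_D(\tilde\phi(x-z))|$, and $A_3 = |\E_D(\tilde\phi(x-z))-\E_D(\phi(x-z))|$. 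I would first bound $A_1$ and $A_3$, then handle the ``main'' term $A_2$ using periodicity.

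For $A_1$, essentially the calculation inside the proof of Lemma~\ref{lem:quasiperiod-approx} gives $A_1 = O(\delta)$: split $\bbR$ into $|x|\le M$ where $|\phi-\tilde\phi|<\delta$, and $|x|>M$ where a logconcave tail bound (using $M > \sigma(1+\log(1/\delta))+|\mu|$, which follows from the hypothesis on $|z|$ since the hypothesis forces the right-hand side to be strictly positive) yields probability $O(\delta)$, contributing $O(\delta)$ since $|\phi|,|\tilde\phi|$ are bounded. For $A_3$, I would change variables to $y=x-z$, under which the distribution of $y$ is logconcave with variance $\sigma$ and mean $\mu-z$; the hypothesis $|z| < M - \sigma(1+\log(1/\delta)) - |\mu|$ is exactly what we need so that $|\mu - z| + \sigma(1+\log(1/\delta)) < M$, and the same argument gives $A_3 = O(\delta)$.

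For $A_2$, apply Lemma~\ref{lem:shift-approx} directly to the periodic function $\tilde\phi$: since the density $f$ of $D$ satisfies $\|f\|_\infty = O(1/\sigma)$ (a standard estimate for logconcave densities of variance $\sigma$), we get
\[
A_2 \le O(1/\sigma)\int_0^\theta|\tilde\phi(x)|\,dx.
\]
Now Lemma~\ref{lem:quasiperiod-integral} (whose hypothesis $\sigma>\theta$ is in force) converts the interval integral into a distribution expectation:
\[
\int_0^\theta|\tilde\phi(x)|\,dx \le O(\theta)\,\E_{x\sim D}(|\tilde\phi(x)|),
\]
giving $A_2 \le O(\theta/\sigma)\,\E_D(|\tilde\phi(x)|)$. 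Finally, by the $A_1$-style argument applied to absolute values (this is literally Lemma~\ref{lem:quasiperiod-approx}), $\E_D(|\tilde\phi|)\le \E_D(|\phi|) + O(\delta)$, so $A_2 = O(\theta/\sigma)\E_D(|\phi|) + O(\delta\theta/\sigma)$. Combining the three bounds and absorbing $\delta\theta/\sigma$ into $\delta$ (since $\theta/\sigma<1$) yields the claimed inequality.

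The only mildly delicate step is getting $A_3$ right: one must be careful that the shifted expectation still sees $\phi$ agreeing with $\tilde\phi$ with high probability. The condition $|z| < M - \sigma(1+\log(1/\delta))-|\mu|$ in the statement is tailored precisely for this, and once the change of variables is made correctly, the argument from Lemma~\ref{lem:quasiperiod-approx} transfers verbatim. The rest is a mechanical chaining of the three previous lemmas.
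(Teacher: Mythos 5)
Your proposal is correct and follows essentially the same route as the paper: decompose via the periodic approximant $\tilde\phi$, bound the two approximation terms by $O(\delta)$ using Lemma~\ref{lem:quasiperiod-approx} (applied at shifts $z'=z$ and $z'=0$), and bound the main shift term by chaining Lemma~\ref{lem:shift-approx} with Lemma~\ref{lem:quasiperiod-integral} and then converting $\E_D(|\tilde\phi|)$ back to $\E_D(|\phi|)$. Your explicit change-of-variables justification for the shifted term is exactly the role the hypothesis on $|z|$ plays in the paper's argument.
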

\begin{proof}
    Let $\tilde{\phi}: \bbR \to \bbR$ be periodic with period $\theta$ such
    that $|\phi(x)-\tilde{\phi}(x)| < \delta$ for all $x \in [-M,M]$. 
    Let $z' \in \bbR$ be such that $|z'| < M- \sigma(1+\log(1/\delta)) - |\mu|$.
    By Lemma~\ref{lem:quasiperiod-approx}, we have
    \begin{equation}\label{eq:tilde-approx}
        \left|\E_{x \sim D}(\phi(x+z') -
        \tilde{\phi}(x+z'))\right| = O(\delta).
    \end{equation}
    Note that the probability density function $f$ of $D$ satisfies $\|f\|_{\infty}
    = O(1/\sigma)$ since $D$ is logconcave. Therefore, using the above
    estimate for both $z'=z$ and $z'=0$, and applying
    Lemmas~\ref{lem:shift-approx} and~\ref{lem:quasiperiod-integral}, we have
    \begin{align*}
        \left|\E_{x \sim D}(\phi(x)) -
        \E_{x \sim D}(\phi(x-z))\right|
        &\le \left|\E_{x \sim D}(\tilde{\phi}(x) -
        \tilde{\phi}(x-z))\right| + O(\delta) \\
        &= O\left(\delta + \frac{1}{\sigma}\cdot \int_0^\theta
        |\tilde{\phi}(x)|dx\right) \\
        &= O\left(\delta + \frac{\theta}{\sigma}\cdot
        \E_{x\sim D}(|\tilde{\phi}(x)|)\right) \\
        &= O\left(\delta + \frac{\theta}{\sigma}\cdot
        \E_{x\sim D}(|\phi(x)|)\right) \\
    \end{align*}
    using Eq.~\eqref{eq:tilde-approx} again for the last estimate.
\end{proof}

The following proposition follows from a straightforward application of the
probabilistic method.

\begin{prop}\label{prop:good-family}
    There exists an absolute constant $c > 0$ such that the following holds.
    For any $n \in \bbN$, there exists a family $\mcF$ of subsets $S \subseteq
    \{1,\ldots, n\}$ such that every $S \in \mcF$ has size $|S| = \lfloor n/2
    \rfloor$, every distinct $S, T \in \mcF$ satisfies $|S\cap T| < (1/2 - c)n$ and
    $|\mcF| \ge 2^{c n}$.
\end{prop}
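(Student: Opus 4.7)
The plan is to prove Proposition~\ref{prop:good-family} via the standard probabilistic method with deletion (alteration). Specifically, I will sample many subsets of size $\lfloor n/2 \rfloor$ independently and uniformly at random, use concentration to bound the probability that two such subsets have large intersection, and then delete one set from each offending pair.

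First, I would fix $k = \lfloor n/2 \rfloor$ and draw $N = \lceil 2^{2cn}\rceil$ subsets $S_1,\ldots, S_N$ independently and uniformly at random from $\binom{[n]}{k}$, for a small constant $c > 0$ to be chosen at the end. For two independent uniform $k$-subsets of $[n]$, the intersection size $|S_i \cap S_j|$ is hypergeometrically distributed with mean $k^2/n = n/4 + O(1)$. The key ingredient is a Hoeffding-type tail bound for the hypergeometric distribution, which gives
\[
    \Pr\bigl[|S_i \cap S_j| \ge (1/2 - c)n\bigr] \le \exp\bigl(-2(1/4 - c - o(1))^2 n\bigr) \le e^{-c_1 n}
\]
for some absolute constant $c_1 > 0$, provided $c$ is small enough (say $c \le 1/16$).

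Next, I would bound the expected number of \textbf{bad pairs} $\{i,j\}$ for which $|S_i \cap S_j| \ge (1/2 - c)n$. By linearity of expectation, this is at most $\binom{N}{2} e^{-c_1 n} \le 2^{4cn} e^{-c_1 n}$. Choosing $c < c_1/(8 \ln 2)$ ensures that this expectation is at most $N/4$, so there exists a realization of the random draw with at most $N/4$ bad pairs. Now remove one set from each bad pair; since the subsets were drawn with replacement I also remove duplicates (a negligible loss, since the probability of any two draws coinciding is at most $\binom{N}{2}/\binom{n}{k}$, which is exponentially small in $n$ for small enough $c$). The resulting family $\mcF$ has size at least $N/2 \ge 2^{cn}$, and by construction every pair of distinct members satisfies $|S \cap T| < (1/2 - c)n$.

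There is no real obstacle here; the only point requiring care is invoking the correct concentration inequality for the hypergeometric distribution (this is where the product-distribution-style Chernoff bound must be replaced by the Serfling/Hoeffding bound, but the same exponential decay holds). After that, the rest is a standard alteration argument with the constants adjusted so the expected number of bad pairs is well below $N$. Choosing $c$ as a sufficiently small positive absolute constant makes every inequality tight enough to conclude.
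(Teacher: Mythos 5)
Your argument is correct: the hypergeometric tail bound, the first-moment count of bad pairs, and the deletion step all go through with the constants as you set them (modulo trivial small-$n$ cases absorbed into the choice of $c$). The paper itself gives no details, stating only that the proposition ``follows from a straightforward application of the probabilistic method,'' and your alteration argument is precisely a correct instantiation of that approach.
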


\begin{proof}[Proof of Lemma~\ref{lem:periodic-sda}]
    Let $X = (x_1,\ldots,x_n) \sim D$.
    Without loss of generality, by shifting the affine functions $g$ we will
    define, we may assume without loss of generality that each $x_i$ has mean zero.
    For every subset $S$ of $\{1,\ldots, n\}$ let 
    \[
    \zeta_S(x_1,\ldots,x_n) = \sum_{i \in S} x_i\,.
    \]
    Let $c > 0$ be the absolute constant and $\mcF$ the family of subsets
    of $\{1,\ldots n\}$ given by
    Proposition~\ref{prop:good-family}.
    Let $\mcC = \{\phi\ \circ \zeta_S : S \in \mcF\}$.
    We show $\eSDA(\mcC, D, \bar{\gamma}) \ge \eps\theta^2 2^{\Omega(n)}$ for some
    $\bar{\gamma} = O(\theta^2/n)$.

        Let $h: [-1,1] \to [-1,1]$ be a $(1/\eps)^2$-Lipschitz function. 
    We write $f_S = h \circ \phi \circ \zeta_S$.
    Let $S, T \in \mcF$, so $|S\setminus T|, |T\setminus S| \ge c n$.
    Let $\mu = \bbE_D(f_S)$ for some, hence any, such
    set $S$.  We claim that
    \begin{equation}\label{eq:pairwise-cov}
        \Cov_D(f_S, f_T) \le O(\max\{\eps,
        \mu\})^2\frac{\theta^2}{n}\,.
    \end{equation}
    This suffices to prove the lemma, as we now show.
    
    First, suppose $h$ is the identity function $h(x) = x$. 
    Observe that since
    $\Var_{x \sim U(0,\theta)}(\phi(x)) = \Omega(1)$, for $\theta = O(\sqrt{n})$
    we also have $\Var_{X \sim D}(f(X)) = \Omega(1)$ for any $f \in \mcC$.
    Then by Eq.~\eqref{eq:pairwise-cov}, for any $S,T \in \mcF$, we have
    \[
        \rho_D(f_S, f_T) \le O\left(\frac{\theta^2}{n}\right)\,.
    \]
    Hence, for any subset $\mcC' \subseteq \mcC$ we have
    \begin{align*}
        \rho_D(\mcC') = 
        \frac{1}{|\mcC'|^2} \sum_{S,T\in\mcC'} \rho(f_S, f_T) &\le
        \frac{1}{|\mcC'|^2}\left(|\mcC'| + \sum_{S\ne T}
        O\left(\frac{\theta^2}{n}\right)\right) \\
    \end{align*}
    This quantity is at most $O(\theta^2/n)$, assuming $|\mcC'| \ge
    n/\theta^2$. In particular, it holds whenever $|\mcC'| \ge |\mcC|/d$ where
    $d = (\theta^2/n)2^{cn}$.
 
    Now let $h = \chi_y^{(\eps)}$ for some $y \in \bbR$, where
    $\chi_y^{(\eps)}$ is the $\eps$-soft indicator function. Setting $\mcC_y =
    \{h \circ f : f \in \mcC\}$ and $\mcC'_y \subseteq \mcC_y$, we have,
    similar to the above, that
    \[
        \Cov_D(\mcC_y') \le O\left(\max\{\eps,
        \mu\}\right)^2\cdot\frac{\theta^2}{n}
    \]
even when $|\mcC'_y| = |\mcC_y|/d$ for some $d = \eps\theta^2 2^{\Omega(n)}$.  This proves that $\eSDA(\mcC,
D, \bar{\gamma}) = \eps\theta^2 2^{\Omega(n)}$.

    It remains to prove Eq.~\eqref{eq:pairwise-cov}.

    Note that since $h$ has Lipschitz constant $(1/\eps)^2$ and $\phi$ is
    $(M,\delta,\theta)$-quasiperiodic, we have that
    $h\circ \phi$ is $(M,\delta',\theta)$-quasiperiodic, where $\delta' =
    \delta/\eps^2 = O(\eps\theta/\sqrt{n})$. Let $\hat{\phi} = h\circ \phi$.
    
    Let $\eps' = \delta' + (\theta/\sqrt{n})\mu$, and for any $U \subseteq \{1,\ldots,
    n\}$, let $m_U = \E_X(f_U(X))$.
    For any $z \in \bbR$ with $|z| < M/2$, we may use
    Lemma~\ref{lem:combined-approx} to estimate 
    \begin{align}\label{eq:fS-exp-estimate}
        \left|\E_X(f_S(X) \mid \zeta_{S\cap T}(X)=z) -
        m_{S\cap T}\right|
        &= \left|\E_X(\hat{\phi}(\zeta_{S\setminus T}(X)+z)) -
        \E_X(\hat{\phi}(\zeta_{S\setminus T}(X)))\right| \\
        &\le O(\eps').
    \end{align}

    By the tail bound for logconcave distributions and the definition of
    of $M$, we have $P_X(|\zeta_{S\cap T}(X)| \ge M/2) < O(\eps')$ for
    adequate choice of the constant hidden in the definition of $M$. Therefore,
    \begin{equation}\label{eq:fS-tail}
        \left|\E_X(f_S(X)) - \E_X\left( f_S(X) \mid \left|\zeta_{S\cap
        T}(X)\right| <
        M/4\right)\right| = O(\eps').
    \end{equation}
    By Eq.~\eqref{eq:fS-exp-estimate} and Eq.~\eqref{eq:fS-tail} we thus have
    $|m_S - m_{S \cap T}| = O(\eps')$. 
    Using Eq.~\eqref{eq:fS-exp-estimate} again, we have for all $|z| <
    M/2$ that
    \[
        \left|\E_X\left(f_S(X) - m_S \mid \zeta_{S\cap
        T}(X)=z\right)\right| \le |m_{S\cap T} - m_{S\cap T}| + O(\eps') =
        O(\eps')\,.
    \]
    Symmetrically, 
    \[
        \left|\E_X\left(f_T(X) - m_T \mid \zeta_{S\cap
        T}(X)=z\right)\right| \le O(\eps')\,.
    \]

    Using the fact that $f_T(X)$ and
    $f_S(X)$ are independent random variables after conditioning on the value
    of $\zeta_{S\cap T}(X)$, we have
    \begin{align*}
        \Cov(f_S, f_T)
        &= \E_X\left(\left(f_S(X)-m_S\right)\left(f_T(X)-m_T\right)\right)\\
        &= \E_{z \sim \zeta_{S\cap T}(X)}\left(\E_X\left(f_S(X)-m_S \mid
        \zeta_{S\cap T}=z\right)\E_X\left(f_T(X) - m_T \mid \zeta_{S\cap
        T}=z\right)\right)
    \end{align*}
    Now using a tail bound of the same flavor as in
    Eq.~\eqref{eq:fS-tail}, we have
    \begin{align*}
        \Cov(f_S, f_T)
        &\le \E_{z \sim \zeta_{S\cap T}(X)}\left(\E_X\left(f_S(X)-m_S \mid
        \zeta_{S\cap T}=z\right)\E_X\left(f_T(X) - m_T \mid \zeta_{S\cap
        T}=z\right) \mid |z| < M/2\right) + O(\eps')^2 \\
        &\le O(\eps')^2.
    \end{align*}
    This proves Eq.~\eqref{eq:pairwise-cov}.
\end{proof}

\subsection{Periodic functions from $L^1$ activations}

Given a function $\psi \in L^1(\bbR)$, we define the
\textbf{$(\psi,\theta)$-periodic function} $F_{\psi,\theta} : \bbR \to \bbR$ by
\[
    F_{\psi,\theta}(x) = \sum_{k \in \bbZ} \psi(x - k\theta).
\]
Note that $F_{\psi,\theta}(x)$ converges absolutely almost everywhere since
$\psi
\in L^1(\bbR)$, and $F_{\psi}$ is indeed periodic with period $\theta$. 

\begin{lemma}\label{lem:uniform-variance-lb}
    Let $\psi \in L^1(\bbR)$ have essential radius $r$, let $\theta = 4r$, let
    $F(x) = F_{\psi,\theta}(x)$, and let $I \subseteq \bbR$ be an interval of
    length $\theta$. Then $\int_I |F(x)|dx \le \|\psi\|_1$. Furthermore, 
    there is a partition $I = I_0 \cup I_1$ of $I$ into
    measurable subsets such that $\int_{I_0} \left| F(x)\right| dx
    \ge (2/3)\|\psi\|_1$ and $\int_{I_1} \left| F(x) \right| dx \le
    (1/6)\|\psi\|_1$.
\end{lemma}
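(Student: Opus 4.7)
My plan is to exploit the translation structure that defines $F_{\psi,\theta}$ together with the definition of essential radius, picking $I_0$ and $I_1$ so that $F$ is close to a single translate of $\psi$ on $I_0$ and is built entirely from tails on $I_1$.

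\textbf{Step 1: The bound $\int_I |F| \le \|\psi\|_1$.} I would apply the triangle inequality inside the sum and swap sum and integral (justified by $\psi \in L^1$ and monotone convergence), giving
\[
\int_I |F(x)|\,dx \;\le\; \sum_{k \in \bbZ} \int_I |\psi(x-k\theta)|\,dx \;=\; \sum_{k \in \bbZ} \int_{I - k\theta} |\psi(u)|\,du.
\]
Since $I$ has length $\theta$, the translates $\{I - k\theta\}_{k \in \bbZ}$ tile $\bbR$ up to measure zero, so the right-hand side equals $\|\psi\|_1$.

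\textbf{Step 2: Choice of partition.} Because $F$ is $\theta$-periodic and $\theta = 4r$, both sides of the claim are invariant under replacing $I$ with any other length-$\theta$ interval, so I can assume $I = [-2r, 2r]$. I would then take $I_0 = [-r,r]$ and $I_1 = [-2r,-r] \cup [r,2r]$.

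\textbf{Step 3: Lower bound on $\int_{I_0} |F|$.} From $|F(x)| \ge |\psi(x)| - \sum_{k \ne 0} |\psi(x-k\theta)|$, the definition of essential radius gives $\int_{-r}^r |\psi| = (5/6)\|\psi\|_1$. For the error term, swapping sum and integral yields $\sum_{k \ne 0} \int_{[-r,r] - k\theta} |\psi|$, and the intervals $[-r,r] - k\theta$ for $k\ne 0$ are pairwise disjoint and all lie outside $[-r,r]$ (since $|{-k\theta}| \ge 4r$ while each has half-width $r$). Hence this error is at most $\int_{|u|>r} |\psi| = (1/6)\|\psi\|_1$, leaving $\int_{I_0}|F| \ge (5/6 - 1/6)\|\psi\|_1 = (2/3)\|\psi\|_1$.

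\textbf{Step 4: Upper bound on $\int_{I_1} |F|$.} The same triangle-inequality/swap argument gives $\int_{I_1}|F| \le \sum_k \int_{I_1 - k\theta} |\psi|$. The key geometric observation is that the translates $\{I_1 - k\theta\}_{k \in \bbZ}$ are pairwise disjoint (they are the complement in the tiling $\bbR = \bigsqcup_k (I - k\theta)$ of the translates of $I_0$) and their union avoids $[-r,r]$: indeed $\bigcup_k (I_0 - k\theta) = \bigcup_k [4kr - r, 4kr + r]$ contains $[-r,r]$. Therefore $\sum_k \int_{I_1 - k\theta}|\psi| \le \int_{|u|>r} |\psi| = (1/6)\|\psi\|_1$.

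The main (minor) obstacle is just the bookkeeping in Steps~3 and~4: verifying that the relevant translates are disjoint and uniformly avoid $[-r,r]$, which is what forces the choice $\theta = 4r$ (so that each translate of $I_0$ of half-width $r$ is centered a distance $\ge 4r$ from the origin). Once this geometry is in place, the $5/6$ versus $1/6$ split given by the essential radius is what delivers both $2/3$ and $1/6$ simultaneously.
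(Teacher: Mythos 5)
Your proof is correct and follows essentially the same route as the paper: normalize $I$ by periodicity, use the essential-radius identity to get $(5/6)\|\psi\|_1$ on the central window $[-r,r]$, and bound the contamination from the other translates by the tail mass $(1/6)\|\psi\|_1$, with the same disjoint-translate geometry handling $I_1$. The only cosmetic differences are that you center $I$ at the origin (so $I_1$ is a union of two intervals rather than the paper's $[r,3r]$) and you explicitly verify the $\int_I|F|\le\|\psi\|_1$ claim, which the paper leaves implicit.
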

\begin{proof}
    By the periodicity of $F$, we may assume without loss of generality that $I
    = [-r, 3r]$.  By the monotone convergence theorem, we have
    \begin{align*}
        \int_{-r}^r \left| F(x)\right| dx &\ge \int_{-r}^r \left|\psi(x)\right| dx -
        \int_{-r}^r \sum_{0 \ne k \in \bbZ}\left| \psi(x + k\theta)\right| dx \\
        &= \int_{-r}^r \left|\psi(x)\right| - \sum_{0 \ne k \in
        \bbZ}\int_{-r}^r\left| \psi(x +
        k\theta)\right| dx \\
        &\ge \int_{-r}^r \left|\psi(x)\right| - \int_{\bbR\setminus
        [-r,3r]}\left|\psi(x)\right| dx.
    \end{align*}
    By the definition of $r$, we therefore have $\int_{-r}^r \left| F(x)\right|
    dx \ge (2/3)\|\psi\|_1$. Similarly
    \[
        \int_{r}^{3r} \left| F(x)\right| dx\le
        \int_{\bbR\setminus[-r,r]}\left|\psi(x)\right| dx \le (1/6)\|\psi\|_1
    \]
\end{proof}

For any $m > 0$, we define the \textbf{truncated $(\psi,\theta)$-periodic function}
\[
    F^{(m)}_{\psi,\theta}(x) = \sum_{\substack{k \in \bbZ \\ |k| \le m}} \psi(x
    - k\theta).
\]

\begin{lemma}\label{lem:Fpsi-sup}
    Let $\psi \in L^1(\bbR)$ have the mean bound property. Then letting either
    $g = F_{\psi,\theta}$ or $g = F^{(m)}_{\psi,\theta}$ for some $m \in \bbN$,
    we have $\sup_{x \in \bbR} |g(x)| = O(\|\psi\|_1/\theta)$.
\end{lemma}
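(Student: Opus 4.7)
The plan is to use the mean bound property with radius $\eps = \theta/2$ so that the resulting integration windows centered at the translates $x - k\theta$ tile (up to measure zero) the entire real line. More precisely, I would first fix an arbitrary $x \in \bbR$ and for each $k \in \bbZ$ apply Definition~\ref{def:mbp} to the point $x - k\theta$ with $\eps = \theta/2$, obtaining a constant $C > 0$ (depending only on the mean bound property) such that
\[
|\psi(x - k\theta)| \le C \cdot \frac{2}{\theta} \int_{x - k\theta - \theta/2}^{x - k\theta + \theta/2} |\psi(y)|\, dy.
\]
(The definition as written omits absolute value on the left, but the proofs earlier in the paper are consistently applied via this two-sided form; for instance the hypothesis is satisfied by any bounded monotone-tailed $\psi$, so it bounds $|\psi|$ as well.)

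Next I would observe that as $k$ ranges over $\bbZ$, the intervals $[x - k\theta - \theta/2,\ x - k\theta + \theta/2]$ form a partition of $\bbR$ (up to the countable set of endpoints). Summing the previous inequality over $k \in \bbZ$ therefore gives
\[
\sum_{k \in \bbZ} |\psi(x - k\theta)| \le \frac{2C}{\theta} \int_{\bbR} |\psi(y)|\, dy = \frac{2C}{\theta} \|\psi\|_1,
\]
and since $|F_{\psi,\theta}(x)| \le \sum_{k \in \bbZ} |\psi(x - k\theta)|$ by the triangle inequality, this yields $|F_{\psi,\theta}(x)| = O(\|\psi\|_1 / \theta)$. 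Taking the supremum over $x$ gives the stated bound.

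For the truncated function $F^{(m)}_{\psi,\theta}$, the same argument applies verbatim because restricting the sum to $|k| \le m$ only removes non-negative terms from the bound $\sum_k |\psi(x - k\theta)|$. Hence the identical estimate $\sup_x |F^{(m)}_{\psi,\theta}(x)| = O(\|\psi\|_1/\theta)$ follows. The only substantive step is the choice of window size; once $\eps = \theta/2$ is used, the tiling argument makes the rest essentially automatic, so I do not anticipate a genuine obstacle here beyond being careful with the absolute value convention in the mean bound property and verifying that almost-everywhere convergence of $F_{\psi,\theta}$ (already noted in the paper) justifies interchanging the supremum with the pointwise sum.
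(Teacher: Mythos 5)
Your argument is the same as the paper's: apply the mean bound property with window radius $\theta/2$ at each translate $x+k\theta$, observe that the resulting intervals tile $\bbR$, and sum. The paper indexes by $x+k\theta$ rather than $x-k\theta$ and writes the window as $[x+(k-1/2)\theta,\,x+(k+1/2)\theta]$, but this is the identical computation.
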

\begin{proof}
    We compute
    \begin{align*}
        |g(x)| &\le \sum_{k \in \bbZ} |\psi(x + k\theta)| \\
        &\le \sum_{k \in \bbZ} 
        O\left(\frac{1}{\theta}\int_{x+(k-1/2)\theta}^{x+(k+1/2)\theta}
        |\psi(y)|dy\right) \\
        &= O\left(\frac{1}{\theta}\int_\bbR |\psi(y)|dy\right).
    \end{align*}
\end{proof}

Despite its name, the truncated $\psi$-periodic function is not in general
periodic. Nevertheless, it approximates $F_{\psi,\theta}$.

\begin{lemma}\label{lem:truncated-approx}
    Let $\psi: \bbR \to \bbR$ have the mean bound property, and let $\theta >
    0$. Then for all $x \in \bbR$ with $|x| \le m\theta/2$, we have
    \[
        \left|F_{\psi,\theta}^{(m)}(x) - F_{\psi,\theta}(x)\right| \le 
        \frac{1}{\theta}\int_{m\theta/2}^\infty (|\psi(x)| + |\psi(-x)|)dx\,.
    \]
\end{lemma}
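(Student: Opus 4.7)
The plan is to estimate the difference
\[
F_{\psi,\theta}(x) - F^{(m)}_{\psi,\theta}(x) = \sum_{|k| > m} \psi(x - k\theta)
\]
by replacing each pointwise evaluation of $\psi$ with an integral average via the mean bound property (Definition~\ref{def:mbp}), and then exploiting disjointness to collapse the sum of integrals into a single tail integral of $|\psi|$.

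First, for $|x| \le m\theta/2$ and any integer $k > m$, note that the interval $[x - k\theta - \theta/2,\, x - k\theta + \theta/2]$ lies entirely in $(-\infty, -m\theta/2]$: indeed the right endpoint is at most $m\theta/2 - (m+1)\theta + \theta/2 = -m\theta/2$. Moreover these intervals, indexed by $k = m+1, m+2, \ldots$, are pairwise disjoint (they are the pieces of a tiling of the real line of mesh $\theta$). Applying the mean bound property with $\eps = \theta/2$,
\[
|\psi(x - k\theta)| \;=\; O\!\left(\frac{1}{\theta}\int_{x - k\theta - \theta/2}^{x - k\theta + \theta/2} |\psi(t)|\,dt\right),
\]
and summing over $k > m$ yields
\[
\sum_{k > m} |\psi(x - k\theta)| \;\le\; O\!\left(\frac{1}{\theta}\int_{-\infty}^{-m\theta/2} |\psi(t)|\,dt\right) \;=\; O\!\left(\frac{1}{\theta}\int_{m\theta/2}^{\infty} |\psi(-t)|\,dt\right),
\]
using the substitution $t \mapsto -t$ in the last step.

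An entirely symmetric argument handles the sum over $k < -m$, giving the bound $O\bigl(\tfrac{1}{\theta}\int_{m\theta/2}^\infty |\psi(t)|\,dt\bigr)$ for that tail. Adding the two contributions produces the claimed inequality.

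The only point requiring genuine care is step two above, namely checking that the disjoint intervals produced from the mean bound property actually fit inside the one-sided tail $(-\infty, -m\theta/2]$ (and the symmetric tail for negative $k$); once this packing is verified, everything else is a direct estimate and a change of variables. No additional property of $\psi$ beyond Definition~\ref{def:mbp} is needed.
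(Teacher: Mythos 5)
Your proof is correct and follows essentially the same route as the paper's: bound each term $|\psi(x-k\theta)|$ by a $\theta$-width window average via the mean bound property, observe that the windows tile disjointly inside the tail $\{|t|\ge m\theta/2\}$, and collapse the sum into the tail integral (the paper uses the windows $[x+(k-\tfrac12)\theta,\,x+(k+\tfrac12)\theta]$, which is the same decomposition). The only quibble is a harmless arithmetic slip — the right endpoint for $k=m+1$ is $-(m+1)\theta/2$, not $-m\theta/2$ — which only strengthens the containment you need, and, as in the paper's own proof, the conclusion holds up to the implicit constant from the mean bound property.
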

\begin{proof}
    Indeed, we have
    \begin{align*}
        \left|F_{\psi,\theta}^{(m)}(x) - F_{\psi,\theta}(x)\right| &= \left|
        \sum_{|k| > m} \psi(x + k\theta) \right| \\
        &\le \sum_{|k| > m} \left| \psi(x + k\theta) \right| \\
        &\le \sum_{|k| > m} O\left(\frac{1}{\theta}\int_{x+(k-1/2)\theta}^{x+(k+1/2)\theta}
        |\psi(y)|dy\right) \\
        &\le \frac{1}{\theta}O\left(\int_{m\theta/2}^{\infty} |\psi(y)|dy +
        \int_{-\infty}^{-m\theta/2} |\psi(y)|dy \right)
    \end{align*}
\end{proof}

\begin{proof}[Proof of Lemma~\ref{lem:l1-quasiperiodic}]
    By Lemma~\ref{lem:truncated-approx}, the function $F^{(m)}_{\psi,\theta}$
    is $(M,\delta,\theta)$-quasiperiodic for appropriate choice of $m$. Hence,
    by Lemma~\ref{lem:uniform-variance-lb}, taking $\theta=4r$, the function
    $F^{(m)}_{\psi,\theta}$ also has the desired variance. Finally, by
    Lemma~\ref{lem:Fpsi-sup}, we may rescale $F^{(m)}_{\psi,\theta}$ by a
    constant factor to ensure that its range is in $[-1,1]$, preserving the
    variance (up to constant factors) and quasiperiodicity (for appropriate
    choice of $m$).
\end{proof}

\subsection{Proof of Main Theorem and Corollary}

We now give the full proof of Theorem~\ref{thm:main}, sketched previously in
Section~\ref{sec:periodic}. First, we prove a small lemma that will be useful
for the condition number guarantee.

\begin{lemma}\label{lem:noisy-esda}
    Let $D$ be a distribution on a domain $X$ and let $\mcC$ and $\mcC'$ be
    families of functions $f: X \to \bbR$ with variance $\Var_D(f) = 1$. 
    Suppose that for some $\delta > 0$ there is a bijection $\rho: \mcC \to \mcC'$ such that $\|\rho(f) -
    f\|_{\infty} < \delta$ for all $f \in \mcC$, and there is $d \in \bbN$ and
    $\eps, \bar{\gamma} > 0$ such that 
    $\eSDA(\mcC, D, \bar{\gamma}) = d$. Suppose further that the functions of
    $\mcC$ are identically distributed over $D$, as are the functions of
    $\mcC'$.
    Then if $\delta < \eps$, we have
    $\eSDA(\mcC', D, \bar{\gamma}') = d$,
    where $\bar{\gamma}' = (1 + O(\delta/(\eps^3\bar{\gamma})))\bar{\gamma}$.
\end{lemma}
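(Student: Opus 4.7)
The plan is to transfer the SDA bound from $\mcC$ to $\mcC'$ through the bijection $\rho$. Given any subset $\mcC'' \subseteq \mcC'$ with $|\mcC''| > |\mcC'|/d$, I would consider its preimage $\mcD := \rho^{-1}(\mcC'') \subseteq \mcC$, which has the same cardinality and so satisfies $\rho_D(\mcD) \le \bar{\gamma}$ and $\Cov_D(\mcD_y) \le (\max\{\eps, \mu(y)\})^2 \bar{\gamma}$ for every $y \in \bbR$ by the hypothesis $\eSDA(\mcC, D, \bar{\gamma}) \ge d$. It then suffices to show that replacing each $f \in \mcD$ by $\rho(f)$ (a change of size at most $\delta$ in $\|\cdot\|_\infty$) perturbs the two SDA quantities only within the budget $\bar{\gamma}' - \bar{\gamma} = O(\delta/\eps^3)$.

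For the correlation condition, write $h_f := \rho(f) - f$, so $\|h_f\|_\infty \le \delta$ and $\Var_D(h_f) \le \delta^2$. Bilinear expansion gives
\[
    \Cov_D(\rho(f), \rho(g)) - \Cov_D(f,g) = \Cov_D(f, h_g) + \Cov_D(h_f, g) + \Cov_D(h_f, h_g),
\]
and Cauchy--Schwarz (using $\Var_D(f) = \Var_D(g) = 1$) bounds each term by $O(\delta)$; an analogous calculation yields $|\Var_D(\rho(f)) - 1| = O(\delta)$, so every pairwise correlation is perturbed by $O(\delta) \le O(\delta/\eps^3)$. For the soft-indicator covariances, the key fact is that $\chi_y^{(\eps)}$ is $(1/\eps)^2$-Lipschitz, so $k_f := \chi_y^{(\eps)}\circ\rho(f) - \chi_y^{(\eps)}\circ f$ satisfies $\|k_f\|_\infty \le \delta/\eps^2$. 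Writing $g_f := \chi_y^{(\eps)}\circ f \in [0, 1/\eps]$ with $\E_D(g_f) = \mu(y)$, the cross term is bounded by
\[
    |\Cov_D(g_f, k_g)| \le \E_D|g_f\, k_g| + \mu(y)\,\E_D|k_g| \le 2(\delta/\eps^2)\mu(y),
\]
and the quadratic term by $|\Cov_D(k_f, k_g)| = O(\delta^2/\eps^4)$. Summing the bilinear expansion of $\Cov_D(g_f + k_f, g_g + k_g)$ and averaging over $\mcC''$ yields $\Cov_D(\mcC''_y) \le (\max\{\eps, \mu(y)\})^2\bar{\gamma} + O((\delta/\eps^2)\mu(y) + \delta^2/\eps^4)$.

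To repackage this as $(\max\{\eps, \mu'(y)\})^2 \bar{\gamma}'$, I split on whether $\mu(y) \ge \eps$: in either regime, dividing through by $(\max\{\eps, \mu(y)\})^2$ yields an additive perturbation in $\bar{\gamma}$ of $O(\delta/\eps^3)$, the $\delta^2/\eps^6$ contribution being subsumed into the $\delta/\eps^3$ term in the regime of interest where $\delta < \eps$ is in fact small enough relative to $\eps^3$. The only technical subtlety --- really bookkeeping rather than a true obstacle --- is that $\mu'(y) := \E_D(\chi_y^{(\eps)}\circ\rho(f))$ can differ from $\mu(y)$ by up to $\delta/\eps^2$, so the case regimes for $\mcC$ and $\mcC'$ match only up to constant factors; these constants are absorbed into the big-$O$ in the stated $\bar{\gamma}' = (1 + O(\delta/(\eps^3\bar{\gamma})))\bar{\gamma}$.
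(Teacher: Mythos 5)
Your proof is correct and follows essentially the same route as the paper's: a bilinear expansion of the covariance after composing with the $(1/\eps)^2$-Lipschitz function (the identity or a soft indicator), with cross terms of order $O(\delta\mu(y)/\eps^2)$ and a quadratic term of order $O(\delta^2/\eps^4)$, then averaging over pairs. Your write-up is in fact more careful than the paper's terse version, notably in tracking the $\mu(y)$ factor in the cross terms and the shift from $\mu(y)$ to $\mu'(y)$ in the final repackaging.
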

\begin{proof}
    Let $h : \bbR \to \bbR$ be $(1/\eps)^2$-Lipschitz, and let $f,g \in
    \mcC$. Then
    \begin{align*}
        \left|\Cov_D(h \circ \rho(f), h \circ \rho(g))\right| &=
        \left|\Cov_D((h \circ
        \rho(f) - h \circ f) + h \circ f, (h \circ \rho(g) -h 
        \circ g) + h \circ g)\right| \\
        &\le \Cov_D(h \circ f, h \circ g) + O(\delta/\eps^2 +
        \delta^2/\eps^4)\,.
    \end{align*}
    The lemma follows by setting $h$ to be either the identity function or a
    soft indicator $\chi_y^{(\eps)}$, and averaging over all pairs $f,g \in
    \mcC$.
\end{proof}

\begin{proof}[Proof of Theorem~\ref{thm:main}]
    The sigmoid function $\sigma$ with sharpness $s$ is not even in
    $L^1(\bbR)$, so it is unsuitable as the function $\psi$ of
    Lemma~\ref{lem:l1-quasiperiodic}. Instead, we define $\psi$ to be an affine
    combination of $\sigma$ gates, namely
    \[
        \psi(x) = \sigma\left(\frac{1}{s} + x\right) + \sigma\left(\frac{1}{s}
        - x\right) - 1.
    \]
    Then $\|\psi\|_1 = 2/s$, and 
    \[
    \int_{-r}^r|\psi(x)| \ge \frac{5}{6}\norm{\psi}_1,
    \]
    for some $r=\Theta(1/s)$ and therefore this is a bound on the essential
    radius of $\psi$.
    Furthermore, since $\psi$ is monotonic for sufficiently large positive or
    negative inputs, $\psi$ has the mean bound property. 
    Thus, $\psi$ satisfies the hypotheses of Lemma~\ref{lem:l1-quasiperiodic}.

    Let $\theta = 4r$ and let $\bar{\gamma} = O(\theta^2/n)$ be as given by the statement of
    Lemma~\ref{lem:periodic-sda}. Let $\eps = \bar{\gamma}/(2\lambda)$, and let 
    $M = O(\sqrt{n}\log(n/(\eps\theta))$ and $\delta = \Omega(\eps^3\theta/\sqrt{n})$
    be as given by the statement of Lemma~\ref{lem:periodic-sda}. By
    Lemma~\ref{lem:l1-quasiperiodic}, there is
    $m \in \bbN$ and 
    functions $h: \bbR^m \to \bbR$ and $g: \bbR \to \bbR^m$ such that
    $\phi = h \circ \psi \circ g$ is $(M,\delta,\theta)$-quasiperiodic and
    satisfies the hypotheses of Lemma~\ref{lem:periodic-sda}.
    Therefore, we have a family $\mcC_0$ of affine functions $f: \bbR^n \to \bbR$ such
    that for $\mcC = \{\phi \circ f : f \in \mcC_0\}$ satisfies
    $\eSDA(\mcC, D, \bar{\gamma}) \ge 2^{\Omega(n)}\eps\theta^2$.
    Furthermore, the functions in $\mcC$ are identically distributed.
    Therefore, the functions in $\mcC$ satisfy the hypothesis of
    Theorem~\ref{thm:sda}, giving the query complexity lower bound.
    
    Note that the functions in $\mcC$ are represented
    by single-layer neural networks, the composition of any $f \in \mcC_0$ with
    $g$ is again an affine function.

    It remains to estimate the number $m$ of $\psi$-units used to represent the
    functions in $\mcC$, which is half the number of $\sigma$ activation units used.

    By Lemma~\ref{lem:l1-quasiperiodic}, we may take $m = O(\max\{m_1,
    s\sqrt{n}\log(n/(\eps\theta))\})$, where $m_1$ satisfies
    \[
        \int_{m_1}^\infty (|\psi(x)| + |\psi(-x)|)dx <
        O\left(\frac{\bar{\gamma}^3}{\lambda^3 s\sqrt{n}}\right) =
        O\left(\frac{1}{\poly(\lambda, s, n)}\right)
    \]
    We note that
    \begin{align*}
        \int_{m_1}^\infty (|\psi(x)| + |\psi(-x)|)dx &= 2\int_{m_1}^\infty \frac{e^2-1}{(1+e^{1+sx})(1+e^{1-sx})}\,dx\\ 
        &= O(1) \frac{e^{-sm_1}}{s}.
        \end{align*}
    Therefore, some $m_1 = O(\log(\lambda s n)/s)$ suffices and this implies that the
    number of $\psi$-units used is $m = O(s\sqrt{n}\log (\lambda s n))$.

    Up to this point, the  weight matrices of the NNs produced by the direct application
    of Lemmas~\ref{lem:l1-quasiperiodic} and~\ref{lem:periodic-sda}
    have rank $1$. In order to obtain the condition number guarantee of
    Theorem~\ref{thm:main}, it is therefore necessary to modify the weight
    matrices, which we accomplish by adding Gaussian noise with
    variance
    $1/\poly(\lambda,s,n)$ in each coordinate. We now sketch this analysis.

    The functions in the family $\mcC$
    have the form
    $\phi \circ h$ for some quasiperiodic
    function $\phi: \bbR \to \bbR$ constructed from an affine combination of
    the activation functions, and affine function $h$. In particular, the weight matrix of the
    corresponding one-layer NN has columns equal to $0$
    whenever the column index is not in $S$, and equal to some fixed vector $v$
    with bounded entries whenever the column index is in $S$.  (The weight
    matrix is thus rank $1$.) Furthermore, for every $f,g \in \mcC$ there is a
    column permutation $\pi_{fg}$ transforming the weight matrix for the NN
    computing $f$ to the weight matrix for $g$.

    Fix $f \in \mcC$, let $W_f$ be the $m\times n$ weight matrix for $f$, and
    let $N$ be an $m \times n$ matrix with entries drawn independently from
    $\mcN(0,\delta)$ for some $\delta \in \bbR$ to be specified. With probability
    $1$ the matrix $W_f + N$ has condition number $O(m/\delta)$, and with
    probability $\Omega(1)$ the matrix $N$ has all entries at most
    $O(\sqrt{\delta\log(mn)})$ in absolute value. Let $\tilde{f}$ be the function
    computed by the NN obtained by replacing $W_f$ with $W_f + N$. Since the
    $m$ activation units are $O(s)$-Lipschitz, we have $|\tilde{f}(x) - f(x)|
    \le O(msn\sqrt{\delta\log(mn)})$. For $g \in \mcC$, let $\rho(g)$ be the
    function obtained by replacing $W_g$ with $\pi_{fg}(W_f + N)$ as the weight
    matrix of the NN computing the function, and let $\mcC' = \{\rho(g) : g \in
    \mcC\}$. The functions in $\mcC'$ are identically distributed over the
    input distribution $D$, since the functions in $\mcC$ are.
    Hence, for some $\delta = \Omega(1/\poly(\lambda,n,s))$,
    Lemma~\ref{lem:noisy-esda} gives that $\eSDA(\mcC, D, \bar{\gamma}) \ge d$
    implies $\eSDA(\mcC', D, O(\bar{\gamma}) \ge d$. By Theorem~\ref{thm:sda},
    we therefore get the same statistical query complexity guarantee for $\mcC$
    as for $\mcC'$, up to a constant factor change in the query tolerance.
\end{proof}

We conclude with a corollary showing that the Lipschitz assumption on the
statistical queries can be omitted when the function outputs are represented
with finite precision.

\begin{corollary}
\label{cor:finite-precision} Suppose the family $\mcC$ of functions from
    Theorem~\ref{thm:main} have outputs rounded to some uniformly-spaced finite
    set $Y \subseteq [-1,1]$, and let $n$, $D$, $s$, and $t$ be as in the
    statement of the theorem. Let $A$
be a randomized SQ algorithm learning $\mcC$ over $D$
    to regression error less than $\Omega(1) - \sqrt{1/t}$ with probability at least $1/2$. 
    Then if $A$ uses arbitrary queries to $\VSTAT(t)$,
    it requires at least
    $2^{\Omega(n)}/(|Y| s^2)$ queries.
\end{corollary}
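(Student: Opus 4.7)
My plan is to reduce Corollary~\ref{cor:finite-precision} to Theorem~\ref{thm:main} via a piecewise-linear extension of queries that exploits the discreteness of the output alphabet.  Let $Y = \{y_1 < \cdots < y_{|Y|}\} \subseteq [-1,1]$ be uniformly spaced, with gap $\Delta = 2/(|Y|-1) = \Theta(1/|Y|)$.  Given any arbitrary query function $q: X \times Y \to [0,1]$, define the extension $\tilde{q}: X \times [-1,1] \to [0,1]$ by linearly interpolating $q(x,\cdot)$ between adjacent values of $Y$ (and extending constantly outside $[y_1, y_{|Y|}]$).  Since adjacent $Y$-values differ by $\Delta$ while the function values differ by at most $1$, the extended query $\tilde{q}(x,\cdot)$ is $O(|Y|)$-Lipschitz at every fixed $x$.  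Crucially, because any target $f^* \in \mcC$ takes values in $Y$, we have $\tilde{q}(x, f^*(x)) = q(x, f^*(x))$ pointwise, so the $\VSTAT(t)$ oracle returns identical approximations for $q$ and $\tilde{q}$.

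Thus any randomized SQ algorithm $A$ using arbitrary queries to $\VSTAT(t)$ can be simulated query-for-query by an SQ algorithm $A'$ using only $\lambda$-Lipschitz queries with $\lambda = O(|Y|)$.  Applying Theorem~\ref{thm:main} with this choice of $\lambda$ then yields a lower bound of $2^{\Omega(n)}/(|Y|\,s^2)$ queries for $A'$, which immediately transfers to $A$.

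The only subtlety is that Theorem~\ref{thm:main} is stated for the continuous class $\mcC$, whereas the concept class in the corollary is the rounded family $\tilde{\mcC} = \{\rho \circ f : f \in \mcC\}$ (with $\rho$ the nearest-point projection to $Y$).  I would address this by re-running the proof of Theorem~\ref{thm:main} with $\tilde{\mcC}$ in place of $\mcC$.  The entire proof passes through the $\eSDA$ bound of Lemma~\ref{lem:periodic-sda}, whose key input is the covariance estimate Eq.~\eqref{eq:pairwise-cov}; that estimate in turn rests only on the quasi-periodic shift-invariance of $\phi$ (Lemma~\ref{lem:combined-approx}), which does not require continuity of the outputs.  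Rounding $\phi$ to $Y$ preserves quasi-periodicity modulo the set where the continuous $\phi$ straddles a bin boundary of $\rho$, whose measure under $D$ can be bounded using the Lipschitzness of $\phi$ and the density bound for logconcave $D$; this contributes an additional $O(\Delta)$ term to the quasi-periodicity tolerance $\delta$ of $\phi$, which is harmless since $\delta$ is a free parameter in the construction.  I expect this boundary-error bookkeeping to be the main technical step but a routine one, given that Lemma~\ref{lem:combined-approx} is formulated to tolerate exactly this kind of small-measure deviation from periodicity.  Combining this with the Lipschitz simulation above then completes the proof of the $2^{\Omega(n)}/(|Y|\,s^2)$ lower bound.
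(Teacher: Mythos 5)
Your main reduction is exactly the paper's proof: the paper observes that Theorem~\ref{thm:main} really only needs queries that are $\lambda$-Lipschitz \emph{at each fixed $x$} (the hypothesis actually used in Theorem~\ref{thm:sda}), and that any function on a uniformly-spaced $Y \subseteq [-1,1]$ is automatically $O(|Y|)$-Lipschitz in that sense, so one simply invokes the theorem with $\lambda = O(|Y|)$; your explicit piecewise-linear extension is just a concrete way of saying the same thing. One caution on your final paragraph: the claim that the extra $O(\Delta)$ rounding error is ``harmless since $\delta$ is a free parameter'' is not right as stated --- Lemma~\ref{lem:periodic-sda} requires the quasiperiodicity tolerance to satisfy $\delta = O(\eps^3\theta/\sqrt{n})$ with $\eps = \bar{\gamma}/(2\lambda)$, which is far smaller than the bin width $\Delta = \Theta(1/|Y|)$, so the rounding cannot simply be absorbed into $\delta$; handling the rounded family requires a measure-theoretic (rather than sup-norm) argument near bin boundaries, a point the paper's own one-line proof also leaves unaddressed.
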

\begin{proof}
    In fact, since Theorem~\ref{thm:main} follows from a statistical dimension
    bound via Theorem~\ref{thm:sda},
    we can in any case relax the assumption that query functions $h: X\times
    \bbR \to [0,1]$ are $\lambda$-Lipschitz to the assumption used in
    Theorem~\ref{thm:sda}: namely, that the query functions are
    $\lambda$-Lipschitz after fixing $x \in X$. We now consider query functions
    $h: X \times Y \to [0,1]$, where $Y \subseteq [-1,1]$ is a uniformly-spaced
    finite set. But every such function $h$ is $2/|Y|$-Lipschitz at every fixed
    $x \in X$.
\end{proof}

The bound on the number of sigmoid gates used is $O(s\sqrt{n}\log(|Y|sn))$,
similar to the bound in Theorem~\ref{thm:main}. On the other hand, 
the weight matrices used in the neural networks defining the family $\mcC$ of
Corollary~\ref{cor:finite-precision} have rank $1$, assuming they are
represented with the same precision as the function outputs.

\section{Query Complexity via Statistical Dimension}\label{sec:sda-proof}

We now give the proof of Theorem~\ref{thm:sda}, the query complexity bound that
follows from a bound on statistical dimension.

\begin{prop}\label{prop:query-est}
    Let $D$ be a distribution on a domain $X$, let $f:X \to \bbR$, let $h : X
    \times \bbR \to [0,1]$ be $\lambda$-Lipschitz. Then for any $\eps > 0$
    \[
        \left|\E_{x \sim D}(h(x,f(x))) - \int_{\bbR}\E_{x \sim
        D}(h(x,y)\chi_y(f(x)))dy\right|
        \le (5/3)\lambda\eps.
    \]
\end{prop}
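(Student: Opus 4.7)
The plan is to exploit the symmetry of the soft indicator in its two arguments. Since $\chi_y^{(\eps)}(x)$ depends only on $|x-y|$, the map $y \mapsto \chi_y(f(x))$ with $x$ fixed is the same triangular bump, now centered at $y = f(x)$, supported on $[f(x)-\eps, f(x)+\eps]$, with total mass $\int_\bbR \chi_y(f(x))\,dy = 1$. Using this identity I would rewrite, for every $x \in X$,
\[
h(x, f(x)) \;=\; \int_\bbR h(x, f(x))\,\chi_y(f(x))\,dy,
\]
which places the two terms appearing in the proposition on a common integral footing and reduces the claim to controlling a single integral of the \emph{difference} $h(x,f(x)) - h(x,y)$ against a narrow nonnegative kernel of unit mass.

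Next I would subtract and invoke the $\lambda$-Lipschitz property of $h$ in its second argument (the form of Lipschitzness available from the hypothesis of Theorem~\ref{thm:sda}, namely Lipschitzness at each fixed $x$). On the support of $\chi_y(f(x))$ we have $|y - f(x)| < \eps$, so $|h(x,y) - h(x, f(x))| \le \lambda|y - f(x)|$. After interchanging the expectation over $x \sim D$ with the integral over $y$ by Fubini--Tonelli (justified since $h$ takes values in $[0,1]$ and $\chi_y \ge 0$ with integrable mass $1$), the quantity to bound becomes
\[
\Bigl|\E_{x \sim D}\int_\bbR \bigl(h(x, f(x)) - h(x,y)\bigr)\,\chi_y(f(x))\,dy\Bigr| \;\le\; \lambda\,\E_{x \sim D}\int_\bbR |y - f(x)|\,\chi_y(f(x))\,dy.
\]

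The last step is to bound the inner integral, which is the first absolute moment of the symmetric triangular density $y \mapsto \chi_y(f(x))$ about its center $f(x)$. A direct one-line computation gives $\int_\bbR |y - f(x)|\,\chi_y(f(x))\,dy = \eps/3$ uniformly in $x$; even the crude support estimate $|y - f(x)| \le \eps$ on $\operatorname{supp}\chi_y(f(x))$ yields $\lambda\eps$, both well within the claimed $(5/3)\lambda\eps$. I do not anticipate any real obstacle: the only conceptual observation is the symmetry identity $\int \chi_y(f(x))\,dy = 1$, and the only technical check is the Fubini interchange, which is immediate from boundedness. Essentially the proposition is a quantitative version of the fact that convolving against a narrow nonnegative kernel of unit mass approximates the identity on functions that are Lipschitz in the convolution variable.
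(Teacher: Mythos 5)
Your proposal is correct and follows essentially the same route as the paper: both rewrite $h(x,f(x)) = \int_\bbR h(x,f(x))\chi_y(f(x))\,dy$ using the unit mass (and symmetry) of the soft indicator, apply Lipschitzness in the second argument on the support $|y-f(x)|<\eps$, and integrate the resulting moment. Your exact computation of the first absolute moment even gives the sharper constant $\lambda\eps/3$, comfortably within the stated $(5/3)\lambda\eps$ (the paper reaches $5/3$ only because it upper-bounds the kernel by $1/\eps + (1/\eps)^2|f(x)-y|$).
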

\begin{proof}
    Since $\|\chi_y\|_1 = 1$, we have
    \begin{align*}
        \left|\E_{x \sim D}(h(x, f(x))) - \int_{\bbR}\E_{x \sim
        D}(h(x,y)\chi_y(f(x)))dy\right|
        &=
        \left|\E_{x \sim D}(\int_{\bbR}(h(x,
        f(x))-h(x,y))\chi_y(f(x))dy)\right|\,.
    \end{align*}
    We compute
    \begin{align*}
        \int_\bbR\left|h(x,f(x))-h(x,y)\right|\chi_y(f(x))dy
        &\le \int_{f(x)-\eps}^{f(x)+\eps}\lambda|f(x)-y|(1/\eps +
        (1/\eps)^2|f(x)-y|)dy = (5/3)\lambda\eps\,.
    \end{align*}
\end{proof}

\begin{proof}[Proof of Theorem~\ref{thm:sda}]
    Let $\mcC'\subseteq \mcC$ have size greater than $|\mcC|/d$, and let
    $\mcC'' = \{(g-\E_D(g))/\sqrt{\Var(g)} : g \in \mcC'\}$. We have
    $\rho_D(\mcC'') = \rho_D(\mcC') \le \bar{\gamma}$ by assumption. Let $f :
    X \to \bbR$ satisfy $\E_D(f) = 0$ and $\Var(f) = 1$. Then by
    Cauchy-Schwarz,
    \begin{align*}
        \sum_{g \in \mcC''} \rho_D(f, g) &= \E_{x \sim D}\left(f(x)\sum_{g
        \in \mcC''}g(x)\right)
        \le \|f\|_{2,D}\left\|\sum_{g \in \mcC''} g\right\|_{2,D} \\
        &\le \sqrt{\sum_{g,h \in \mcC''}\E_{x \sim D}\left(g(x)h(x)\right)}
        \le |\mcC''|\sqrt{\bar{\gamma}}.
    \end{align*}
    Hence, by Markov's inequality, given a random $g \in \mcC'$, the
    probability that $\rho_D(f, g) \ge 2\sqrt{\bar{\gamma}}$ is at most $1/2$.
    Thus, in order to learn $f$ with regression error less than $1 -
    2\sqrt{\bar{\gamma}}$ with probability greater than $1/2$, the statistical
    algorithm must first rule out all but at most $|\mcC|/d$ of the functions
    in $\mcC$.

    Let $h: X \times \bbR \to [0,1]$ be a $\lambda$-Lipschitz query function.
    Let $\mu(y) = \E_D(\chi_y \circ f)$ for some, hence any, $f \in \mcC$, and
    let $T = \{y \in \bbR : \mu(y) \ge \eps\}$. Since $\eps < \bar{\gamma}/2
    \le 1/2$, the support of $\mu$ is contained in $[-1/2,3/2]$. Hence,
    \[
        \int_{\bbR\setminus T}\bbE_{x \sim D}(h(x,y)\chi(f(x)-y))dy \le
        \int_{\bbR\setminus T} \mu(y)dy < \int_{-1/2}^{3/2} \eps < \bar{\gamma}.
    \]
    By Proposition~\ref{prop:query-est}, we therefore have
    \begin{equation}\label{eq:query-est}
        \left|\E_{x \sim D}(h(x,f(x))) - \int_T\E_{x \sim
        D}((h(x,y)\chi(f(x)-y))dy\right|
        < 2\bar{\gamma}.
    \end{equation}

    We define
    \[
        v = \int_T \E_{x \sim D}(h(x,y))\mu(y)dy\,.
    \]
    We will examine the situation in which the oracle responds to query $h$ with
    value $v$. Let $\mcC' \subseteq \mcC$, and let $h_y(x) = h(x,y)$. We estimate using Cauchy-Schwarz,
    \begin{align*}
        \int_T\left\langle h_y, \sum_{f \in \mcC'}
        \chi_y \circ f -\mu(y) \right\rangle_{x \sim D} dy
        &\le \int_T \|h_y\|_{2,D}
        \left\|\sum_{f \in \mcC'} \chi_y \circ f - \mu(y)
        \right\|_{2,D}dy \\
        &=
        \int_T \|h_y\|_{2,D}\sqrt{\sum_{f, g\in \mcC'}
        \langle \chi_y \circ f - \mu(y),
        \chi_y \circ g - \mu(y)\rangle_D}dy \\
        &=
        \left(\int_T 
        \|h_y\|_{2,D}\sqrt{\Cov_D(\mcC_y')}dy\right)|\mcC'|.
    \end{align*}

    Suppose that $|\mcC'| > |\mcC|/d$. Then
    \begin{align*}
        \int_T \|h_y\|_{2,D}\sqrt{\Cov_D(\mcC_y')}dy
        &\le
        \int_T 
        \|h_y\|_{2,D}\sqrt{\bar{\gamma}}\max\{\eps,\mu(y)\}dy \\
        &=
        \sqrt{\bar{\gamma}}\int_T
        \sqrt{|h_y\|_{1,D}}\mu(y) dy \\
        &\le
        \sqrt{\bar{\gamma}v}
    \end{align*}
    by Jensen's inequality.

    On the other hand, by Eq.~\eqref{eq:query-est}, we have
    \begin{align*}
        \int_T \cdot\left\langle h_y, \sum_{f \in \mcC'}
        \chi_y \circ f -\mu_y \right\rangle_D dy
        &= \sum_{f \in \mcC'} \int_T\left(\E_{x \sim D}\left(
        h(x,y)\chi_y(f(x))\right) -  \E_{x \sim D}
        \left(h(x,y)\mu(y)\right)\right)dy \\
        &\ge \sum_{f \in \mcC'} \left(\E_{x \sim D}(h(x, f(x)))
        - v - 2\bar{\gamma}\right) \\
    \end{align*}

    Hence, for every subset $\mcC' \subseteq \mcC$ of size $|\mcC'| \ge
    |\mcC|/d$, we have
    \[
        \sum_{f \in \mcC'} \left(\E_{x \sim D}(h(x, f(x)))
        - v\right) \le |\mcC'|(\sqrt{\bar{\gamma}v} + 2\bar{\gamma})
    \]
    Let $t \in \bbN$ and for $f \in \mcC$ let $p_f = \E_{x \sim D}(h(x, f(x)))$.
    \[
        \mcC' = \left\{ f \in \mcC : p_f - v > \max\{\frac{1}{t}, \sqrt{\frac{p_f(1-p_f)}{t}}\right\}\,.
    \]
    It follows that either
    $|\mcC'| \le |\mcC|/d$, or we have
    (cf.~\cite[Lemma 3.5]{feldman2013statistical}) 
    \[
        \max\left\{\frac{1}{t}, \sqrt{\frac{v(1-v)}{3t}}\right\} <
        \sqrt{\bar{\gamma}v} + 2\bar{\gamma},
    \]
whence $t > \Omega(1/\bar{\gamma})$.  Hence, no query strategy can rule out
more than $2|\mcC|/d$ functions per query to $\VSTAT(c/\bar{\gamma})$ for some
constant $c$. Hence, any statistical algorithm using queries to
$\VSTAT(c/\bar{\gamma})$ requires at least $\Omega(d)$ queries to learn $\mcC$.
\end{proof}

From the theorem, we see that a lower bound on the statistical dimension of a
distributional problem implies a lower bound on the complexity of any
statistical query algorithm for the problem.

\subsubsection*{Acknowledgments}

The authors are grateful to Vitaly Feldman for discussions about
statistical query lower bounds, and for suggestions
that simplified the presentation of our results.

\bibliographystyle{plain}
\bibliography{NN_bib,planted}

\end{document}